\documentclass[twoside,11pt]{article}

\usepackage{blindtext}
\usepackage{multirow}
%

%
%
%
 \usepackage[abbrvbib, preprint]{jmlr2e}

\usepackage{amsmath}
\usepackage{amsfonts}%
\usepackage{amssymb}

\usepackage{graphicx}
\usepackage{enumerate}
\usepackage{natbib}
\usepackage{url} 
\usepackage{xr}

\usepackage{subcaption, caption}

\usepackage{bbm}
\usepackage{bm}
\usepackage{enumerate}
\usepackage{comment}
\usepackage{subcaption}
\usepackage{xcolor}
\usepackage{comment}

\usepackage[ruled,linesnumbered]{algorithm2e}

\newcommand{\ba}{ {\boldsymbol a} }
\newcommand{\bA}{ {\boldsymbol A} }
\newcommand{\bb}{ {\boldsymbol b} }
\newcommand{\bB}{ {\boldsymbol B} }

\newcommand{\bI}{ {\boldsymbol I} }

\newcommand{\bP}{ {\boldsymbol P} }

\newcommand{\bs}{ {\boldsymbol s} }

\newcommand{\bu}{ {\boldsymbol u} }

\newcommand{\bv}{ {\boldsymbol v} }
\newcommand{\bV}{ {\boldsymbol V} }
\newcommand{\bw}{ {\boldsymbol w} }

\newcommand{\bx}{ {\boldsymbol x} }
\newcommand{\bX}{ {\boldsymbol X} }
\newcommand{\by}{ {\boldsymbol y} }

\newcommand{\balpha}  { {\boldsymbol \alpha} }
\newcommand{\bbeta}   { {\boldsymbol \beta} }
\newcommand{\bgamma}  { {\boldsymbol \gamma} }

\newcommand{\bDelta}  { {\boldsymbol \Delta} }
\newcommand{\bepsilon}{ {\boldsymbol \epsilon} }

\newcommand{\bet}     { {\boldsymbol \eta} }

\newcommand{\bmu}     { {\boldsymbol \mu} }

\newcommand{\bPhi}    { {\boldsymbol \Phi} }

\newcommand{\bpsi}    { {\boldsymbol \psi} }

\newcommand{\bOmega}  { {\boldsymbol \Omega} }

\def\T{{ \mathrm{\scriptscriptstyle T} }}

\newcommand{\given}{\, |\,}



\usepackage{lastpage}
\jmlrheading{26}{2025}{1-\pageref{LastPage}}{4/23; Revised
8/24}{5/25}{23-0505}{Rajarshi Guhaniyogi, Laura Baracaldo and Sudipto Banerjee}
\ShortHeadings{Bayesian Data Sketching}{Guhaniyogi, Baracaldo and Banerjee}

\begin{document}

\title{Bayesian Data Sketching for Varying Coefficient Regression Models}

\author{\name Rajarshi Guhaniyogi \email rajguhaniyogi@tamu.edu \\
       \addr Department of Statistics\\
       Texas A \& M University\\
       College Station, TX 77843-3143, USA
       \AND
       \name Laura Baracaldo \email lnbaracaldol@ucsb.edu  \\
       \addr Department of Statistics and Applied Probability\\
       University of California Santa Barbara\\
       Santa Barbara, CA 93106-3110, USA
       \AND
       \name Sudipto Banerjee \email sudipto@ucla.edu \\
       \addr UCLA Department of Biostatistics \\
       University of California Los Angeles \\
       Los Angeles, CA 90095-1772, USA.}

\editor{Ryan Adams}

\maketitle

\begin{abstract}
Varying coefficient models are popular for estimating nonlinear regression functions in functional data models. Their Bayesian variants have received limited attention in large data applications, primarily due to prohibitively slow posterior computations using Markov chain Monte Carlo (MCMC) algorithms. We introduce Bayesian data sketching for varying coefficient models to obviate computational challenges presented by large sample sizes. To address the challenges of analyzing large data, we compress the functional response vector and predictor matrix by a random linear transformation to achieve dimension reduction and conduct inference on the compressed data. Our approach distinguishes itself from several existing methods for analyzing large functional data in that it requires neither the development of new models or algorithms, nor any specialized computational hardware while delivering fully model-based Bayesian inference. Well-established methods and algorithms for varying coefficient regression models can be applied to the compressed data. 
We establish posterior contraction rates for estimating the varying coefficients and predicting the outcome at new locations with the randomly compressed data model. We use simulation experiments and analyze remote sensed vegetation data to empirically illustrate the inferential and computational efficiency of our approach.

\end{abstract}

\begin{keywords}
B-splines, Predictive Process, Posterior contraction, Random compression matrix, Varying coefficient models.
\end{keywords}

\section{Introduction}
\label{sec:intro}


We develop a statistical learning framework for functional data analysis using Bayesian data sketching to deliver inference that scales massive functional datasets. 
``Data sketching'' \citep{vempala2005random, halko2011finding, mahoney2011randomized, woodruff2014sketching, guhaniyogi2015bayesian, guhaniyogi2016compressed} is a compression method that is increasingly used to analyze massive amounts of data. The entire data set is compressed before being analyzed for computational efficiency. Data sketching proceeds by transforming the original data through a random linear transformation to produce a much smaller number of data samples. We analyze the compressed data, thereby achieving dimension reduction. 

Such developments have focused mainly on ordinary linear regression and penalized linear regression \citep{zhang2013recovering, chen2015fast, dobriban2018new, drineas2011faster, ahfock2017statistical, huang2018near}, we develop such methods for functional regression models. Our primary challenge is probabilistic learning for the underlying effects of functional coefficients in the context of varying regression models. Although we have some similarities, our current contribution differs from compressed sensing \citep{donoho2006compressed, ji2008bayesian, candes2006near, eldar2012compressed, yuan2014tree} in inferential objectives. Specifically, compressed sensing solves an inverse problem by ``nearly'' recovering a sparse vector of responses from a smaller set of random linear transformations. In contrast, our functionally indexed response vector is not necessarily sparse. Also, we do not seek to recover (approximately) the original values in the response vector.

We consider a varying-coefficient model (VCM) where all functional variables (response and predictors) are defined in a $d$-dimensional indexed space $\mathcal{D}\subseteq\mathbb{R}^d$. For temporal data $d=1$ and for spatial data applications $d=2$, while for spatial-temporal applications the domain is ${\cal D} = \mathbb{R}^{2}\times \mathbb{R}^{+}$ and the index is a space-time tuple $(\bu = (\bs, t))$. For each index $\bu\in\mathcal{D}$, the functional response $y(\bu)\in\mathcal{Y}\subseteq\mathbb{R}$ and $P$ functional predictors $x_1(\bu),...,x_P(\bu)\in\mathcal{X}\subseteq\mathbb{R}$, are related according to a posited varying coefficients regression model
\begin{align}\label{eq: svc_basic_general}
y(\bu) = \sum_{j=1}^P x_j(\bu)\beta_j + \sum_{j=1}^{\tilde{P}} \tilde{x}_j(\bu)w_j(\bu) + \epsilon(\bu) = \bx(\bu)^{\T}\bbeta + \tilde{\bx}(\bu)^{\T}\bw(\bu) + \epsilon(\bu)\;,
\end{align}
where $\bbeta=(\beta_1,\beta_2,\ldots,\beta_P)^{\T}$ is a $P\times 1$ vector of functionally static coefficients, $\tilde{\bx}(\bu) = (\tilde{x}_1(\bu),\tilde{x}_2(\bu),\ldots,  \tilde{x}_{\tilde{P}}(\bu))^{\T}$ is a $\tilde{P}\times 1$ vector comprising a subset of predictors from $\bx(\bu)$ (so $\tilde{P} \leq P$) whose impact on the response is expected to vary over the functional inputs, $\bw(\bu)=(w_1(\bu), w_2(\bu),\ldots, w_{\tilde{P}}(\bu))^{\T}$ is a $\tilde{P}\times 1$ vector of functionally varying regression slopes, and $\epsilon(\bu) \stackrel{iid}{\sim} N(0,\sigma^2)$ captures measurement error variation at location $\bu$. Functionally varying regression coefficient models are effective in learning the varying impact of predictors on the response in time series \citep[see, e.g.,][and references therein]{chen1993functional,cai2000functional} and in spatial applications
\citep[see, e.g.,][and references therein]{gelfand2003spatial, banerjeejohnson2006, wheeler2007assessment, finley2011hierarchical, guhaniyogi2013modeling, finleyBanerjee2020, kim_wang} and in spatial-temporal data analysis \citep[see, e.g.,][and references therein]{lee2021clustered}. When $d=2$, customary geostatistical regression models with only a spatially-varying intercept emerge if the first column of $x(\bu)$ is the intercept and $\tilde{P}=1$ with $\tilde{x}_1(\bu) = 1$. Spatially varying coefficient models, a class of varying coefficient models for $d=2$, also offer a process-based alternative to the widely used geographically weighted regression \citep[see, e.g.,][]{brundson_gwr} to model non-stationary behavior in the mean. \cite{finley2011gwr} offers a comparative analysis and highlights the richness of (\ref{eq: svc_basic_general}) in ecological applications.

Bayesian inference for (\ref{eq: svc_basic_general}) is computationally expensive for large data sets due to the high-dimensional covariance matrix introduced by $w(\bu)$ in (\ref{eq: svc_basic_general}). The modeling of high-dimensional dependent functional data has been attracting significant interest, and the growing literature on scalable methods, which has been adapted and built on scalable spatial models \citep[see, e.g.,][for reviews in spatial statistics]{banerjee2017high, heaton2019case}, is too vast to be comprehensively reviewed here. Briefly, model-based dimension reduction in functional data models have proceeded from fixed-rank representations \citep[e.g.,][]{cressie2008fixed, banerjee2008gaussian, wikle_2011, snelson2005sparse, burt2020convergence}, multi-resolution approaches \citep[e.g.,][]{nychka2015,guhaniyogi2018large}, sparsity-inducing processes \citep[e.g.,][]{ve88, datta16, zhangDattaBanerjee2019, katzfuss2021general, pbf2021} and divide-and-conquer approaches such as meta-kriging \citep{guhaniyogi2018meta, guhaniyogi2017divide,guhaniyogi2020distributed}. 

While most of the aforementioned methods entail new classes of models and approximations, or very specialized high-performance computing architectures, Bayesian data sketching has the advantage that customary exploratory data analysis tools, well-established methods, and well-tested available algorithms for implementing (\ref{eq: svc_basic_general}) can be applied to the sketched data without requiring new algorithmic or software development. We pursue fully model-based Bayesian data sketching, where inference proceeds from a hierarchical model \citep{cressie2015statistics, ban14}. The hierarchical approach to functional data analysis is widely employed for inferring on model parameters that may be weakly identified from the likelihood alone and, more relevantly for substantive inference, for estimating the functional relationship between response and predictors over the domain of interest. For analytic tractability we model the varying coefficients using basis expansions \citep{wikle_2011, wang2008variable, wang2009shrinkage, bai2019fast} rather than Gaussian processes. 

We exploit some recent developments in the theory of random matrices to relate the inference from the compressed data with the full-scale functional data model. We establish consistency of the posterior distributions of the varying coefficients and analyze the predictive efficiency of our models based on the compressed data. Posterior contraction of varying-coefficient (VC) models have been investigated by a few recent articles. For example, \cite{guhaniyogi2020distributed} derive minimax optimal posterior contraction rates for Bayesian VC models under GP priors when the number of predictors $P$ is fixed. \cite{deshpande2020vcbart} also derived near-optimal posterior contraction rates under BART priors, and \cite{bai2019fast} provided an asymptotically optimal rate of estimation for varying coefficients with a variable selection prior on varying coefficients. We address these questions in the context of data compression, which has largely remained unexplored.

While our approach randomly compresses the data for efficient Bayesian inference, there is a related but distinct approach that relies on stochastic gradient decent with subsampled or mini-batch input at each iteration for efficient computation. Traditionally, both proposal generation and acceptance test within the Metropolis-Hastings algorithm require a full pass over the data, which results in reduced efficiency. Subsampling approaches address this by using mini-batches or subsets of data for both the proposal step and the acceptance test. For the proposal, \cite{welling2011bayesian} introduced the Stochastic Gradient Langevin Dynamics (SGLD) algorithm, a variant of the first-order Langevin dynamics that adds noise to ensure the correct noise distribution. They also anneal the step size to zero, eliminating the need for an acceptance test. \cite{ahn2012bayesian} moved away from Langevin dynamics and proposed a method based on Fisher scoring. \cite{chen2014stochastic} introduced the Stochastic Gradient Hamiltonian Monte Carlo (SGHMC) algorithm, which is based on a variant of second-order Langevin dynamics with momentum to update the state. Similarly to SGLD, SGHMC injects additional noise, but also reduces the effect of gradient noise. For the acceptance test, \cite{korattikara2014austerity} proposed a sequential hypothesis test for Metropolis-Hastings proposals based on a fraction of the full dataset. Building on this seminal work, other mini-batch MH algorithms were developed by \cite{seita2016efficient} and \cite{bardenet2014towards}. In recent years, mini-batched approaches in Markov chain Monte Carlo (MCMC) have expanded to include tempered methods \citep{li2017mini}, Gibbs sampling \citep{de2018minibatch}, and gradient-based proposals \citep{wu2022mini}, with a comprehensive review provided by \cite{bardenet2017markov}. Subsampling-based approaches are further extended to derive distributed Bayesian approaches, where instead of computing a gradient with different subsamples at each step, posterior distributions are independently fitted on different mini-batches followed by combining the inferences from these mini-batches \citep{guhaniyogi2018meta,guhaniyogi2019multivariate,guhaniyogi2020distributed,guhaniyogi2023distributed}. Here, we build the compressed model and are agnostic to the specific estimation algorithm. In fact, while we use MCMC in subsequent analysis, alternative approaches such as predictive stacking \citep{zhangTangBanerjee2024} can be used to learn about the functional coefficients.  

The balance of this article proceeds as follows. Section~\ref{sec:geostatistical} develops our data sketching approach and discusses Bayesian implementation of VC models with sketched data. Section~\ref{sec:theory} establishes posterior contraction rates for varying coefficients under data sketching. Section~\ref{sec:empirical} demonstrates performance of the proposed approach with simulation examples and a forestry data analysis. Finally, Section~\ref{sec: summary} concludes the paper with an eye toward future extensions. All proofs of the theoretical results are placed in the Appendix.


\section{Bayesian Compressed Varying Coefficient Models}\label{sec:geostatistical}
{
\subsection{Model construction}\label{sec:model_construction}
}
We model each varying coefficient $w_j(\bu)$ in (\ref{eq: svc_basic_general}) as
\begin{align}\label{basis}
w_j(\bu) = \sum_{h=1}^H B_{jh}(\bu)\gamma_{jh}\;,\quad j=1,...,\tilde{P}\;, 
\end{align}
where each $B_{jh}(\bu)$ is a basis function evaluated at an index $\bu$ for $h=1,...,H$, and $\gamma_{jh}$'s are the corresponding basis coefficients. The distribution of these $\gamma_{jh}$'s  yields a multivariate process with $\mbox{cov}(w_i(\bu),w_j(\bu')) = \bB_i(\bu)^{\T}\mbox{cov}(\bgamma_i,\bgamma_j)\bB_j(\bu)$, where $\bB_i(\bu)$ and $\bgamma_i$ are $H\times 1$ with elements $B_{ih}(\bu)$ and $\gamma_{ih}$, respectively, for $h=1,\ldots,H$. 

Appropriate basis functions can produce appropriate classes of multivariate functional processes. Several choices are available. For example, \cite{biller2001bayesian} and \cite{huang2015bayesian} use splines to model the $B_{jh}(\bu)$'s and place Gaussian priors on the basis coefficients $\gamma_{jh}$. \cite{li2015bayesian} propose a scale mixture of multivariate normal distributions to shrink groups of basis coefficients toward zero. More recently, \cite{bai2019fast} proposed using B-spline basis functions and multivariate spike-and-slab discrete mixture prior distributions on basis coefficients to aid selection of functional variables. Other popular choices for basis functions include wavelet basis \citep{vidakovic2009statistical, cressie2015statistics}, radial basis \citep{bliznyuk2008bayesian}, and locally bisquare \citep{cressie2008fixed} or elliptical basis functions \citep{lemos2009spatio}. Alternatively, a basis representation of $w_j(\bu)$ can be constructed by envisioning $w_j(\bu)$ as the projection of a Gaussian process $w_j(\bu)$ onto a set of reference points, or ``knots'', producing predictive processes or sparse Gaussian processes \citep[see, e.g.,]{snelson2005sparse, banerjee2008gaussian, guhaniyogi2013modeling}. More generally, each $w_j(\bu)$ can be modeled using multiresolution analogues of the aforementioned models to capture global variations at lower resolution and local variations at higher resolutions \citep{katzfussmultires, guhaniyogi2018large}.

Let $\{y(\bu_i), \bx(\bu_i)\}$ be observations at the $N$ index points ${\cal U} = \{\bu_1, \bu_2,\ldots, \bu_N\}$. Using (\ref{basis}) in (\ref{eq: svc_basic_general}) yields the Gaussian linear mixed model
\begin{align}\label{eq: stack_model}
\by = \bX\bbeta + \tilde{\bX}\bB\bgamma + \bepsilon\;,\quad \bepsilon\sim N(0,\sigma^2I_N)\;.
\end{align}
where $\by = (y(\bu_1), y(\bu_2),\ldots, y(\bu_N))^{\T}$ and $\bepsilon = (\epsilon(\bu_1), \epsilon(\bu_2),\ldots, \epsilon(\bu_N))^{\T}$ are $N\times 1$ vectors of responses and errors, respectively, $\bX$ is $N\times P$ with $n$-th row $\bx(\bu_n)^{\T}$, $\tilde{\bX}$ is the $N\times N\tilde{P}$ block-diagonal matrix with $(n,n)$-th block $\tilde{\bx}(\bu_n)^{\T}$, $\bB = (\bB(\bu_1)^{\T},\ldots, \bB(\bu_N)^{\T})^{\T}$ is $N\tilde{P}\times H\tilde{P}$ with $\bB(\bu_n)$ a block-diagonal $\tilde{P}\times H\tilde{P}$ matrix whose $j$-th diagonal block is $(B_{j1}(\bu_n),\ldots,B_{jH}(\bu_n))$. The coefficient $\bgamma = (\bgamma_1^{\T},...,\bgamma_{\tilde{P}}^{\T})^{\T}$ is $H\tilde{P}\times 1$ with each $\bgamma_j=(\gamma_{j1},\ldots,\gamma_{jH})^{\T}$ being $H\times 1$. Bayesian methods for estimating (\ref{eq: stack_model}) typically employ a multivariate normal prior \citep{biller2001bayesian, huang2015bayesian} or its scale-mixture (discrete as well as continuous) variants \citep{li2015bayesian, bai2019fast} on $\bgamma$.

While the basis functions project the coefficients into a low-dimensional space, working with (\ref{eq: stack_model}) will be still be expensive for large $N$ and will be impracticable for delivering full inference (with robust probabilistic uncertainty quantification) for data sets with $N \sim 10^5+$ on modest computing environments. Furthermore, as is well understood in linear regression, specifying a small number of basis functions in (\ref{eq: stack_model}) can lead to substantial over-smoothing and, consequently, biased residual variance estimates in functional varying coefficient models \citep[see, e.g., the discussion in Section~2.1 of][ including Figures~1~and~2 in the paper]{banerjee2017high}.  Instead, we consider data compression or sketching using a random linear mapping to reduce the size of the data from $N$ to $M$ observations. For this, we use $M$ one-dimensional linear mappings of the data encoded by an $M\times N$ compression matrix $\bPhi$ with $M<<N$. This compression matrix is applied to $\by$, $\bX$ and $\tilde{\bX}$ to construct the $M\times 1$ compressed response vector $\by_{\bPhi} = \bPhi \by$ and the matrices $\bX_{\bPhi} = \bPhi \bX$ and $\tilde{\bX}_{\bPhi} = \bPhi \tilde{\bX}$. We will return to the specification of $\bPhi$, which, of course, will be crucial for relating the inference from the compressed data with the full model. For now assuming that we have fixed $\bPhi$, we construct a Bayesian hierarchical model for the compressed data
\begin{equation}\label{eq: svc_basic_compressed_bhm}
 p(\bpsi, \bbeta, \bgamma, \sigma^2 \given \by_{\bPhi}, \bPhi) \propto p(\bpsi, \sigma^2, \bbeta, \bgamma) \times N(\by_{\bPhi}\given \bX_{\bPhi}\bbeta + \tilde{\bX}_{\bPhi}\bB\bgamma, \sigma^2I_M)\;,
\end{equation}
where $\bpsi$ denotes additional parameters specifying the prior distributions on either $\bgamma$ or $\bbeta$. For example, a customary specification is 
\begin{equation}\label{eq: svc_basic_compressed_bhm_priors}
    p(\bpsi, \sigma^2, \bbeta, \bgamma) = \prod_{i=1}^{\tilde{P}}IG(\tau_i^2\given a_{\tau},b_{\tau}) \times IG(\sigma^2\given a_{\sigma},b_{\sigma})\times N(\bbeta\given \bmu_{\beta}, \bV_{\beta}) \times N(\bgamma \given {\boldsymbol 0}, \bDelta)\;,
\end{equation}
where $\bpsi=\{\tau_1^2,...,\tau_{\tilde{P}}^2\}$ and $\bDelta$ is $H\tilde{P}\times H\tilde{P}$ block-diagonal with $j$-th block given by $\tau_j^2\bI_H$, for $j=1,...,\tilde{P}$. While (\ref{eq: svc_basic_compressed_bhm_priors}) is a convenient choice for empirical investigations due to conjugate full conditional distributions, our method applies broadly to any basis function and any discrete or continuous mixture of Gaussian priors on the basis coefficients. In applications where the associations among the latent regression slopes is of importance, one could, for instance, adopt $p(\bpsi,\bgamma) = IW(\bpsi\given r, \bOmega)\times N(\bgamma\given 0,\bDelta_\bpsi)$ with $\bpsi$ as the $H\tilde{P}\times H\tilde{P}$ covariance matrix for $\bgamma$. Our current focus is not, however, on such multivariate models, so we do not discuss them further except to note that (\ref{eq: svc_basic_compressed_bhm}) accommodates such extensions. 

The likelihood in (\ref{eq: svc_basic_compressed_bhm}) is different from that by applying $\bPhi$ to (\ref{eq: stack_model}) because the error distribution in (\ref{eq: svc_basic_compressed_bhm}) is retained as the usual noise distribution without any effect of $\bPhi$. Hence, the model in (\ref{eq: svc_basic_compressed_bhm}) is a model analogous to (\ref{eq: stack_model}) but applied to the \emph{new} compressed data $\{\by_{\bPhi}, \bX_{\bPhi}, \tilde{\bX}_{\bPhi}\}$. However, (\ref{eq: svc_basic_compressed_bhm}) can be regarded as an approximately compressed version of (\ref{eq: stack_model}) when $\bPhi$ is a random matrix constructed in a manner customary for sketching matrices \citep{sarlos2006improved}. To see this, note that a compressed version of Equation~(\ref{eq: stack_model}) will lead to an error $\bPhi\bepsilon$ which follows N($\boldsymbol{0},\sigma^2\bPhi\bPhi^T)$. Lemma 5.36 and Remark 5.40 of \cite{vershynin2010introduction} ensure that when $\bPhi$ is a random matrix constructed as described in this article, the condition $||\bPhi\bPhi^T-\bI||\leq C'\sqrt{M/N}$ for some constant $C'$ is met with probability at least $1-\exp(-C''M)$.
Hence, with a very high probability, $\bPhi\bepsilon$ behaves approximately as an $M$-dimensional i.i.d. noise when $M/N$ is small, thus building a connection between Model~(\ref{eq: stack_model})~and~(\ref{eq: svc_basic_compressed_bhm}).
This connection is also key to the computational benefits offered by our model, since working with a $\bPhi$-transformed model (\ref{eq: stack_model}), where the noise distribution is transformed according to $\bPhi\bepsilon$, will not deliver the computational benefits we desire.

For specifying $\bPhi$ we pursue ``data oblivious Gaussian sketching'' \citep{sarlos2006improved}, where we draw the elements of $\bPhi = (\Phi_{ij})$ independently from N($0,1/N$) and fix them. The dominant computational operations for obtaining the sketched data using Gaussian sketches is $O(MN^2\tilde{P})$. While Gaussian sketching constructs dense matrices, there are alternative options, oblivious to data, such as the Hadamard sketch \citep{ailon2009fast} and the Clarkson - Woodruff sketch \citep{clarkson2017low} that are available for $\bPhi$. These strategies employ discrete distributions (e.g., Rademacher distribution), instead of a Gaussian distribution, to construct sparse random matrices, which enhances computational efficiency for sketching large data matrices. However, this is less crucial in Bayesian settings since the computation time of (\ref{eq: svc_basic_compressed_bhm}) far exceeds that for the construction of the sketched data. The compressed data serves as a surrogate for the Bayesian regression analysis with varying coefficients. Since the number of compressed records is much smaller than the number of records in the uncompressed data matrix, model fitting becomes computationally efficient and economical in terms of storage as well as the number of floating point operations (flops). Importantly, the original data are not recoverable from the compressed data, and the latter reveal no more information than would be revealed by a completely new sample \citep{zhou2008compressed}. In fact, the original uncompressed data does not need to be stored or accessed at any stage in the course of the analysis.


\subsection{Posterior Computations \& Predictive Inference}\label{sec:eff_post}
In what follows, we discuss efficient computation offered by the data sketching framework. With prior distributions on parameters specified as in (\ref{eq: svc_basic_compressed_bhm_priors}), posterior computation requires drawing Markov chain Monte Carlo (MCMC) samples sequentially from the full conditional posterior distributions of $\bgamma|-$, $\bbeta|-$, $\sigma^2|-$ and $\tau_j^2|-$, $j=1,\ldots,\tilde{P}$. To this end, $\sigma^2|-\sim IG(a_{\sigma}+M/2,b_{\sigma}+||\by_{\bPhi}-\bX_{\bPhi}\bbeta-\tilde{\bX}_{\bPhi}\bB\bgamma||^2/2)$, $\tau_j^2|-\sim IG(a_{\tau}+H/2,b_{\tau}+||\bgamma_j||^2/2)$ and $\bbeta|-\sim N\left(\left(\bX_{\bPhi}^{\T}\bX_{\bPhi}/\sigma^2+\bI\right)^{-1}\bX_{\bPhi}^{\T}(\by_{\bPhi}-\tilde{\bX}_{\bPhi}\bB\bgamma)/\sigma^2,\left(\bX_{\bPhi}^{\T}\bX_{\bPhi}/\sigma^2+\bI\right)^{-1}\right)$ do not present any computational obstacles. The main computational bottleneck lies with $\bgamma|-$,
\begin{align}\label{cond_beta}
N\left(\left(\frac{\bB^{\T}\tilde{\bX}_{\bPhi}^{\T}\tilde{\bX}_{\bPhi}\bB}{\sigma^2} +\bDelta^{-1}\right)^{-1}\bB^{\T}\tilde{\bX}_{\bPhi}^{\T}\frac{(\by_{\bPhi}-\bX_{\bPhi}\bbeta)}{\sigma^2},(\bB^{\T}\tilde{\bX}_{\bPhi}^{\T}\tilde{\bX}_{\bPhi}\bB/\sigma^2+\bDelta^{-1})^{-1}\right).
\end{align}
Efficient sampling of $\bgamma$ uses the Cholesky decomposition of $\left(\bB^{\T}\tilde{\bX}_{\bPhi}^{\T}\tilde{\bX}_{\bPhi} \bB/\sigma^2+\bDelta^{-1}\right)$ and solves triangular linear systems to draw a sample from (\ref{cond_beta}). While numerically robust for small to moderately large $H$, computing and storing the Cholesky factor involves $O((H\tilde{P})^3)$ and $O((H\tilde{P})^2)$ floating point operations, respectively \citep{golub2012matrix}. This produces bottlenecks for a large number of basis functions, which is required to estimate the functional coefficients with sufficient local variation. 

To achieve computational efficiency, we adapt a recent algorithm proposed in \cite{bhattacharya2016fast} (in the context of ordinary linear regression with uncompressed data and small sample size) to our setting, with the details provided in Algorithm~\ref{alg1}. Predictive inference on $y(\bu_0)$ proceeds from the posterior predictive distribution 
\begin{equation}\label{eq: posterior_predictive_compressed_bhm}
\mathbb{E}[p(y(\bu_0)\given \by_{\bPhi}, \bbeta, \bgamma, \sigma^2)] = \int p(y(\bu_0)\given \by_{\bPhi}, \bbeta, \bgamma, \sigma^2)p(\bbeta, \bgamma, \sigma^2\given  \by_{\bPhi}, \bPhi) d\bbeta d\bgamma d\sigma^2\;,    
\end{equation}
where $\mathbb{E}[\cdot]$ is the expectation with respect to the posterior distribution in (\ref{eq: svc_basic_compressed_bhm}). This is easily achieved using composition sampling, as outlined in Algorithm~\ref{alg2}.
\begin{algorithm}
	\caption{Parametric Inference from the Proposed Model}
	\label{alg1}
	\SetAlgoLined
    \Begin
    {    	
    	Draw $\tilde{\bgamma}_1\sim N({\boldsymbol 0},\bDelta)$ and $\tilde{\bgamma}_2\sim N({\boldsymbol 0},\bI_M)$\\
    	Set $\tilde{\bgamma}_3=\tilde{\bX}_{\bPhi}\bB \tilde{\bgamma}_1/\sigma+\tilde{\bgamma}_2$\\
	    Solve $(\tilde{\bX}_{\bPhi}\bB \bDelta \bB^{\T}\tilde{\bX}_{\bPhi}^{\T}/\sigma^2+\bI_M)\tilde{\bgamma}_4=\left(\left(\by_{\bPhi}-\bX_{\bPhi}\bbeta\right)/\sigma-\tilde{\bgamma}_3\right)$\\
           Set $\tilde{\bgamma}_5=\tilde{\bgamma}_1+\bDelta \bB^{\T}\tilde{\bX}_{\bPhi}^{\T}\tilde{\bgamma}_4/\sigma$.\\
           The resulting $\tilde{\bgamma}_5$ is a draw from the full conditional posterior distribution of $\bgamma$. The computation is dominated by step~(iii), which comprises $O(M^3+M^2H\tilde{P})$.\\
           When basis functions involve parameters, they are updated using Metropolis-Hastings steps since no closed form full conditionals are generally available for them.\\
    }
\end{algorithm}

\begin{algorithm}
	\caption{Predictive Inference from the Proposed Model}
	\label{alg2}
	\SetAlgoLined
    \Begin
    {
    $L$ denotes the number of post-convergence posterior samples.
    \For{$ l=1:L $}
    {
    	Draw $\{\bbeta^{(l)},\bgamma^{(l)}, \sigma^{2(l)}\}$ from (\ref{eq: svc_basic_compressed_bhm})\\
    	Draw $w_j(\bu_0)^{(l)}$ from $\bgamma^{(l)}$ using (\ref{basis})\\
	    Draw $y(\bu_0)^{(l)} \sim N(\sum_{p=1}^{P} x_p(\bu_0)\beta_p^{(l)}+\sum_{j=1}^{\tilde{P}}\tilde{x}_j(\bu_0)w_j(\bu_0)^{(l)},\sigma^{2(l)})$\\
           }
      $y(\bu_0)^{(1)},...,y(\bu_0)^{(L)}$ are samples from (\ref{eq: posterior_predictive_compressed_bhm}).  
    }
\end{algorithm}

The next section offers theoretical results on asymptotic consistency of the posterior distribution for the compressed model (\ref{eq: svc_basic_compressed_bhm}) and the posterior predictive distribution (\ref{eq: posterior_predictive_compressed_bhm}) with respect to the probability law for the uncompressed oracle model in (\ref{eq: svc_basic_general}).


\section{Posterior contraction from data sketching}\label{sec:theory}
\subsection{Definitions and Notations}
This section proves the posterior contraction properties of varying coefficients under the proposed framework. In what follows, we add a subscript $N$ to the compressed response vector $\by_{\bPhi,N}$, compressed predictor matrix $\tilde{\bX}_{\bPhi,N}$, dimension of the compression matrix $M_{N}$ and the number of basis functions $H_N$ to indicate that all of them increase with the sample size $N$. 
Naturally, the dimension of the basis coefficient vector $\bgamma$ and the compression matrix $\bPhi$ are also functions of $N$, though we keep this dependence implicit. Since we do not assume a functional variable selection framework, we keep $P$ fixed throughout, and not a function of $N$. We assume that $\bu_1,...,\bu_N$ follow i.i.d.
distribution $G$ on $\mathcal{D}$ with $G$ having a Lebesgue density $g$, which is bounded away from zero and infinity uniformly over $\mathcal{D}$.
The true regression function is also given by (\ref{eq: svc_basic_general}), with the true varying coefficients $w_1^*(\bu),...,w_{\tilde{P}}^*(\bu)$ belonging to the class of functions
\begin{align}
\mathcal{F}_{\xi}(\mathcal{D})=\{f:f\in L_2(\mathcal{D})\cap\mathcal{C}^{\xi}(\mathcal{D}),E_{\mathcal{U}}[|f|]<\infty\},
\end{align}
where $L_2(\mathcal{D})$ is the set of all square integrable functions on $\mathcal{D}$, $\mathcal{C}^{\xi}(\mathcal{D})$ is the class of at least $\xi$-times continuously differentiable functions in $\mathcal{D}$ and $E_{\mathcal{U}}$ denotes the expectation under the density of $g$. The probability and expectation under the true data generating model are denoted by $P^*$ and $E^*$, respectively. For algebraic simplicity, we make a few simplifying assumptions in the model. To be more specific, we assume that $\bbeta={\boldsymbol 0}$ and $\sigma^2=\sigma^{*2}$ is known and fixed at $1$. The first assumption is mild since $P$ does not vary with $N$ and we do not consider variable selection. The second assumption is also customary in asymptotic studies \citep{vaart2011information}. Furthermore, the theoretical results obtained by assuming $\sigma^2$ as a fixed value is equivalent to those obtained by assigning a prior with a bounded support on $\sigma^2$ \citep{van2009adaptive}. 

For a vector $\bv=(v_1,...,v_N)^{\T}$, we let $||\cdot||_1, ||\cdot||_{2}$ and $||\cdot||_{\infty}$ denote the $L_1, L_2$ and $L_{\infty}$ norms defined as  $||\bv||_2=(\sum_{n=1}^Nv_n^2)^{1/2}$, $||\bv||_1=\sum_{n=1}^N|v_n|$ and $||\bv||_{\infty}=\max_{n=1,..,N}|v_n|$, respectively. The number of nonzero elements in a vector is given by $||\cdot||_0$. In the case of a square integrable function $f(\bu)$ on $\mathcal{D}$, we denote the integrated $L_2-$norm of $f$ by $||f||_{2}=\left(\int_{\mathcal{D}} f(\bu)^2g(\bu)d\bu\right)^{1/2}$ and the sup-norm of $f$ by $||f||_{\infty}=\sup_{\bu\in\mathcal{D}}|f(\bu)|$. Thus $||\cdot||_{\infty}$ and $||\cdot||_2$ are used both for vectors and functions, and they should be interpreted based on the context. Finally, $e_{\min}(\bA)$ and $e_{\max}(\bA)$ represent the minimum and maximum eigenvalues of the symmetric matrix $\bA$, respectively. The Frobenius norm of the matrix $\bA$ is given by $||\bA||_F=\sqrt{\mbox{tr}(\bA^{\T}\bA)}$. For two nonnegative sequences $\{a_N\}$ and $\{b_N\}$, we write $a_N\asymp b_N$ to denote $0<\liminf_{N\rightarrow\infty}a_N/b_N\leq \limsup_{N\rightarrow\infty}a_N/b_N<\infty$. If $\lim_{N\rightarrow\infty}a_N/b_N=0$, we write $a_N=o(b_N)$ or $a_N\prec b_N$. We use $a_N\lesssim b_N$ or $a_N=O(b_N)$ to denote that for sufficiently large $N$, there exists a constant $C>0$ independent of $N$ such that $a_N\leq Cb_N$.

\subsection{Assumption, Framework and Main Results}
For simplicity, we assume that the random covariates $x_p(\bu)$, $p=1,...,P$ follow distributions which are independent of the distribution of the idiosyncratic error $\epsilon$. 
We now state the following assumptions on the basis functions, $H_N, M_N$,  covariates and the sketching or compression matrix.
\begin{enumerate}[(A)]
\item For any $w_{j}^{\ast}(\bu)\in\mathcal{F}_{\xi}(\mathcal{D})$, there exists $\bgamma_{j}^*$ such that $$||w_j^{\ast}-\bB_j^{\T}\bgamma_j^{\ast}||_{\infty} = \sup\limits_{\bu\in\mathcal{D}}|w_j^*(\bu)-\sum_{h=1}^{H_N}B_{jh}(\bu)\gamma_{jh}^*|=O(H_N^{-\xi}), $$ for $j=1,...,\tilde{P}$, and $||\bgamma^{\ast}||_2^2\prec M_N^{d/(d+2\xi)}$.
\item $N,M_N,H_N$ satisfy $M_N=o(N)$ and $H_N\asymp M_N^{1/(2\xi+d)}$.
\item $||\bPhi\bPhi^{\T}-\bI_{M_N}||_F\leq C'\sqrt{M_N/N}$, for some constant $C'>0$, for all large $N$.
\item The random covariate $x_p(\bu)$ are uniformly bounded for all $\bu\in\mathcal{D}$, and w.l.g., $|x_p(\bu)|\leq 1$, for all $p=1,...,P$ and for all $\bu\in\mathcal{D}$.
\item There exists a sequence $\kappa_N$ such that 
$||\tilde{\bX}_{\bPhi,N}\balpha||^2\asymp\kappa_N||\tilde{\bX}_{N}\balpha||^2$, such that $1\prec N\kappa_N\prec M_N$ for any vector $\balpha\in\mathbb{R}^{N\tilde{P}}$.
\item For simplicity, assume $\bDelta=\bI$, $\bbeta=\boldsymbol{0}$, $\sigma^2$ is known and without loss of generality, $\sigma^2=1$.
\end{enumerate}

Assumption~(A) holds for orthogonal Legendre polynomials, Fourier series, B-splines and wavelets \citep{shen2015adaptive}. Assumption~(B) provides an upper bound on the growth of $M_N$ and $H_N$ as a function of $N$. Assumption~(C) is a mild assumption based on the theory of random matrices and occurs with probability at least $1-e^{-C''M_N}$ when $\bPhi$ is constructed using the Gaussian sketching for a constant $C''>0$ (see Lemma 5.36 and Remark 5.40 of \cite{vershynin2010introduction}). Assumption~(D) is a technical condition customarily used in functional regression analysis \citep{bai2019fast}. Assumption~(E) characterizes the class of feasible compression matrices, roughly explaining how the linear structure of the columns of the original predictor matrix is related to that of the compressed predictor matrix. Such an assumption is reasonable for the set of random compression matrices for a sequence $\kappa_N$ depending on $N$, $M_N$ and $\tilde{P}$ \citep{ahfock2017statistical}. As argued here, both Assumptions (C) and (E) can be proved to hold with high probability. We include them as assumptions because they are considered to hold with probability 1. This practice is common when random matrices are used in the study of computationally efficient Bayesian models \citep{guhaniyogi2015bayesian, guhaniyogi2016compressed, guhaniyogi2021sketching}, as it allows the focus to remain on model uncertainty without factoring in the uncertainty of random matrix construction. Finally, Assumption (F) is assumed for simplicity in mathematical derivation, and it could potentially be relaxed.

Let $\bw(\bu)=(w_1(\bu),...,w_{\tilde{P}}(\bu))^{\T}$ and $\bw^*(\bu)=(w_1^*(\bu),...,w_{\tilde{P}}^*(\bu))^{\T}$ be the $\tilde{P}$-dimensional fitted and true varying coefficients. Let
$\|\bw-\bw^*\|_2=\sum_{j=1}^{\tilde{P}}\|w_j-w_j^{\ast}\|_2$ denote the sum of integrated $L_2$ distances between the true and the fitted varying coefficients. Define the set 
$\mathcal{C}_N=\left\{\bw:||\bw-\bw^{*}||_2>\tilde{C}\theta_N\right\}$, for some constant $\tilde{C}$ and some sequence $\theta_N\rightarrow 0$ and $M_N\theta_N^2\rightarrow\infty$.
In addition, suppose $\pi_{N}(\cdot)$ and $\Pi_N(\cdot)$ are the prior and posterior densities of $w$ with $N$ observations, respectively. From equation (\ref{basis}), the prior distribution on $\bw$ is governed by the prior distribution on $\bgamma$, so that the posterior probability of $\mathcal{C}_N$ is
\begin{align*}
\Pi_N(\mathcal{C}_N|\by_{\bPhi,N},\tilde{\bX}_{\bPhi,N})=\frac{\int_{\mathcal{C}_N}f(\by_{\bPhi,N}|\tilde{\bX}_{\bPhi,N},\bgamma)\pi_{N}(\bgamma)}
                          {\int f(\by_{\bPhi, N}|\tilde{\bX}_{\bPhi,N},\bgamma)\pi_{N}(\bgamma)},
\end{align*}
where $f(\by_{\bPhi,N}|\tilde{\bX}_{\bPhi,N},\bgamma)$ is the joint density of $\by_{\bPhi,N}$ under model (\ref{eq: svc_basic_compressed_bhm}).
We begin with the following important result from the random matrix theory.

\begin{lemma}\label{lem1}
Consider the $M_N\times N$ compression matrix $\bPhi$ with each entry drawn independently from $N(0,1/N)$. Also, assume that $M_N=o(N)$. Then, almost surely
\begin{align}\label{compression_lemma}
(\sqrt{N}-o(\sqrt{N}))^2/N\leq e_{\min}(\bPhi\bPhi^{\T})\leq  e_{\max}(\bPhi\bPhi^{\T})\leq (\sqrt{N}+o(\sqrt{N}))^2/N,
\end{align}
when $N\rightarrow\infty$.
\end{lemma}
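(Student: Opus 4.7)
The plan is to recognize that $\bPhi\bPhi^{\T}$ is, up to a scalar, a Wishart-type matrix built from a standard Gaussian ensemble, and to invoke a sharp non-asymptotic concentration bound for the extreme singular values of such matrices. First I would rescale by setting $\bA = \sqrt{N}\,\bPhi$, so that $\bA$ is $M_N \times N$ with i.i.d.\ $N(0,1)$ entries and $\bPhi\bPhi^{\T} = \bA\bA^{\T}/N$. The eigenvalues of $\bPhi\bPhi^{\T}$ are then $s_i(\bA)^2 / N$, where $s_i(\bA)$ are the singular values of $\bA$, so it suffices to prove that almost surely
\[
\sqrt{N} - o(\sqrt{N}) \;\leq\; s_{\min}(\bA) \;\leq\; s_{\max}(\bA) \;\leq\; \sqrt{N} + o(\sqrt{N}),
\]
and then square and divide by $N$.

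The workhorse is the Gordon--Davidson--Szarek type inequality that the paper has already cited (Corollary 5.35 / Remark 5.40 of \cite{vershynin2010introduction}): for every $t \geq 0$,
\[
\mathbb{P}\!\left(\sqrt{N} - \sqrt{M_N} - t \;\leq\; s_{\min}(\bA) \;\leq\; s_{\max}(\bA) \;\leq\; \sqrt{N} + \sqrt{M_N} + t\right) \;\geq\; 1 - 2\exp(-t^2/2).
\]
To upgrade this "in probability" bound to an "almost sure" statement, I would choose a sequence $t_N \to \infty$ satisfying both $t_N = o(\sqrt{N})$ and $\sum_N \exp(-t_N^2/2) < \infty$. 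A concrete choice is $t_N = \sqrt{6 \log N}$, giving a tail bound of $2 N^{-3}$, which is summable. By the Borel--Cantelli lemma, almost surely for all sufficiently large $N$,
\[
\sqrt{N} - \sqrt{M_N} - \sqrt{6 \log N} \;\leq\; s_{\min}(\bA) \;\leq\; s_{\max}(\bA) \;\leq\; \sqrt{N} + \sqrt{M_N} + \sqrt{6 \log N}.
\]
Since by assumption $M_N = o(N)$, we have $\sqrt{M_N} = o(\sqrt{N})$, and trivially $\sqrt{6\log N} = o(\sqrt{N})$; hence both correction terms are $o(\sqrt{N})$, yielding the desired bounds on $s_{\min}(\bA)$ and $s_{\max}(\bA)$ and, after squaring and dividing by $N$, the claimed inequalities for $e_{\min}(\bPhi\bPhi^{\T})$ and $e_{\max}(\bPhi\bPhi^{\T})$.

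The only genuine subtlety is in calibrating $t_N$: it must grow fast enough that the per-$N$ failure probabilities are summable (so Borel--Cantelli delivers \emph{almost sure}, not merely \emph{in-probability}, convergence of the eigenvalues to unity), yet slowly enough that $t_N/\sqrt{N} \to 0$ so it can be absorbed into the $o(\sqrt{N})$ term. Any $t_N = \sqrt{c \log N}$ with $c > 2$ works. Everything else is a mechanical translation between singular values of $\bA$ and eigenvalues of $\bPhi\bPhi^{\T}$, and I expect no additional obstacles.
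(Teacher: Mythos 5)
Your proposal is correct and follows essentially the same route as the paper, which simply cites Theorem~5.31 and Corollary~5.35 of \cite{vershynin2010introduction} for this lemma. In fact your write-up is more complete: you make explicit the rescaling $\bA=\sqrt{N}\,\bPhi$, the calibration of $t_N$, and the Borel--Cantelli step needed to upgrade the high-probability bound to the almost-sure statement, all of which the paper leaves implicit.
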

\begin{proof}
This is a consequence of Theorem~5.31 and Corollary~5.35 of \cite{vershynin2010introduction}.
\end{proof}
The inequalities in (\ref{compression_lemma}) are used to derive the following two results, which we present as Lemma~\ref{lem2}~and~\ref{lem3}.
\begin{lemma}\label{lem2}
Let $P^{\ast}$ denote the true probability distribution of $\by_N$ and $f^*(\by_{\bPhi,N}|\bgamma^*)$ denotes the density of $\by_{\bPhi,N}$ (omitting explicit dependence on $\tilde{\bX}_{\bPhi,N}$) under the true data generating model. Define
\begin{align}\label{eq:lem1}
\mathcal{A}_N=\left\{\by:\int \left\{f(\by_{\bPhi,N}|\bgamma)/f^*( \by_{\bPhi,N}|\bgamma^*)\right\} \pi_N(\bgamma)d\bgamma\leq \exp(-CM_N\theta_N^2)\right\}.
\end{align}
Then $P^*(\mathcal{A}_N)\rightarrow 0$ as $M_N,N\rightarrow\infty$ for any constant $C>0$.
\end{lemma}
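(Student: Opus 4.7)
My plan is to apply the standard evidence-lower-bound technique of Ghosal--Ghosh--van der Vaart, adapted to the compressed Gaussian setting. By Jensen's inequality, for any Borel set $B_N\subset\mathbb{R}^{H_N\tilde{P}}$ with positive prior mass,
\begin{equation*}
\int\frac{f(\by_{\bPhi,N}\given\bgamma)}{f^*(\by_{\bPhi,N}\given\bgamma^*)}\,\pi_N(\bgamma)\,d\bgamma\;\ge\;\pi_N(B_N)\,\exp\bigl(-\bar\ell_N(\by_{\bPhi,N})\bigr),
\end{equation*}
where $\bar\ell_N(\by_{\bPhi,N})=\pi_N(B_N)^{-1}\int_{B_N}\log[f^*(\by_{\bPhi,N}\given\bgamma^*)/f(\by_{\bPhi,N}\given\bgamma)]\,\pi_N(\bgamma)\,d\bgamma$. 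Hence $\mathcal{A}_N\subseteq\{\bar\ell_N-\log\pi_N(B_N)\ge CM_N\theta_N^2\}$, and it suffices to bound the expectation and variance of $\bar\ell_N$ together with the prior mass of $B_N$ in a compatible way.

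I would take $B_N=\{\bgamma:\|\bgamma-\bgamma^*\|_2\le\epsilon_N\}$ with $\epsilon_N$ of polynomial order in $\theta_N$. Three estimates are then required. For the \emph{prior mass}, Assumption~(F) gives $\bgamma\sim N(\boldsymbol 0,\bI_{H_N\tilde{P}})$, and a routine Gaussian small-ball bound yields $-\log\pi_N(B_N)\le \tfrac{1}{2}\|\bgamma^*\|_2^2+H_N\tilde{P}\log(c/\epsilon_N)$, which is $O(M_N\theta_N^2)$ by Assumptions~(A) and~(B). For the \emph{KL divergence}, note that under $P^*$, $\by_{\bPhi,N}\sim N(\tilde{\bX}_{\bPhi,N}\bB\bgamma^*+\ba,\,\bPhi\bPhi^{\T})$ with $\ba=\bPhi\tilde{\bX}_N\br$ absorbing the basis residual $\|\br\|_\infty=O(H_N^{-\xi})$ from Assumption~(A). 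The Gaussian KL formula gives
\begin{equation*}
K(f^*,f_\bgamma)=\tfrac{1}{2}\|\tilde{\bX}_{\bPhi,N}\bB(\bgamma-\bgamma^*)-\ba\|_2^2+\tfrac{1}{2}\,\mathrm{tr}(\bPhi\bPhi^{\T}-\bI_{M_N})-\tfrac{1}{2}\log\det(\bPhi\bPhi^{\T}).
\end{equation*}
Lemma~\ref{lem1} and Assumption~(C) make the trace and log-determinant terms $o(M_N\theta_N^2)$; Assumptions~(D)--(E) bound $\|\tilde{\bX}_{\bPhi,N}\bB(\bgamma-\bgamma^*)\|_2^2$ linearly in $\|\bgamma-\bgamma^*\|_2^2$, and $\|\ba\|_2^2=O(NH_N^{-2\xi})$ is controlled via Assumption~(B), so the quadratic term is $O(M_N\theta_N^2)$ uniformly on $B_N$. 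For the \emph{variance}, write $\bz=\by_{\bPhi,N}-E^*\by_{\bPhi,N}\sim N(\boldsymbol 0,\bPhi\bPhi^{\T})$ under $P^*$; Fubini decomposes the $\bz$-dependent part of $\bar\ell_N$ into a centered linear form in $\bz$ of variance $O(M_N\theta_N^2)$ plus a quadratic form with coefficient $\tfrac{1}{2}[\bI-(\bPhi\bPhi^{\T})^{-1}]$ of variance $O(\|\bPhi\bPhi^{\T}-\bI_{M_N}\|_F^2)=o(M_N\theta_N^2)$ via Lemma~\ref{lem1}. Chebyshev's inequality then closes the argument.

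The main obstacle is the joint bookkeeping of three distinct sources of remainder --- the basis-approximation residual $\ba$, the spectral deviation $\bPhi\bPhi^{\T}-\bI_{M_N}$, and the Gaussian noise $\bz$ --- all absent or trivial in the classical uncompressed posterior-contraction arguments. Each must be kept inside the $M_N\theta_N^2$ budget by carefully invoking the rate couplings $M_N=o(N)$, $H_N\asymp M_N^{1/(2\xi+d)}$, and $\|\bgamma^*\|_2^2\prec M_N^{d/(d+2\xi)}$ from Assumptions~(A)--(C) together with the spectral bounds of Lemma~\ref{lem1}, and maintaining this delicate balance is the non-routine content of the proof.
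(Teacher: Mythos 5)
Your proposal is correct and follows essentially the same route as the paper: the Jensen-plus-Chebyshev evidence-lower-bound reduction you derive by hand is exactly the content of Lemma~10 of \cite{ghosal2007convergence}, which the paper invokes as a black box, and your three estimates (Gaussian small-ball prior mass using $\|\bgamma^*\|_2^2\prec M_N\theta_N^2$, the Gaussian KL formula with the $\bPhi\bPhi^{\T}$-versus-$\bI$ spectral terms controlled by Lemma~\ref{lem1} and Assumption~(C), and the design/residual terms controlled by Assumptions~(A), (B) and~(E)) coincide with the paper's bounds on $K(f^*,f)$, $V(f^*,f)$ and $\Pi(\mathcal{A}_{1N})$. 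The only cosmetic difference is that you state several budget items as $O(M_N\theta_N^2)$ where the ``for any $C>0$'' form of the lemma requires them to be $o(M_N\theta_N^2)$ (or to have adjustable constants via the radius $\epsilon_N$); the estimates you invoke do in fact deliver this, as the paper's explicit computations show.
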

\begin{proof}
See Section~\ref{sec:lem2} in the Appendix.
\end{proof}

\begin{lemma}\label{lem3}
Let $\bgamma^{\ast}$ be any fixed vector in the support of $\bgamma$ and let 
$\mathcal{B}_N=\{\bgamma:||\bgamma-\bgamma^{\ast}||_2\leq C_{2w}\theta_NH_N^{1/2}\}$ for some constant $C_{2w}>0$. 
Then there exists a sequence 
$\zeta_N$ %
of random variables
depending on $\{\by_{\bPhi,N}, \bX_{\bPhi,N}\}$ and taking values in $(0,1)$ %
such that
\begin{align}
\mathbb{E}^{\ast}(\zeta_N)\lesssim \exp(-M_N\theta_N^2) \mbox{ and } \sup\limits_{\bgamma\in\mathcal{\bB}_N^c}\mathbb{E}_{\bgamma}(1-\zeta_N)\lesssim \exp(-M_N\theta_N^2),
\end{align}
where $\mathbb{E}_{\bgamma}$ and $\mathbb{E}^{\ast}$ denote the expectations under the distributions $f(\cdot\given\bgamma)$ and $f^{\ast}(\cdot\given\bgamma^{\ast})$, respectively.
\end{lemma}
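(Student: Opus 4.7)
The plan is to exploit the Gaussian linear structure of the compressed likelihood to build a Neyman--Pearson style test for each fixed alternative, and then extend to a uniform test by a covering argument. Under Assumption~(F), $\by_{\bPhi,N}\given\bgamma \sim N(\tilde{\bX}_{\bPhi,N}\bB\bgamma,\bI_{M_N})$, so for any fixed $\bgamma'\in\mathcal{B}_N^c$ the Neyman--Pearson test of $\bgamma^{\ast}$ against $\bgamma'$ rejects $\bgamma^{\ast}$ when
\begin{align*}
\langle \by_{\bPhi,N} - \tilde{\bX}_{\bPhi,N}\bB\bgamma^{\ast},\; \tilde{\bX}_{\bPhi,N}\bB(\bgamma'-\bgamma^{\ast})\rangle > \tfrac{1}{2}\|\tilde{\bX}_{\bPhi,N}\bB(\bgamma'-\bgamma^{\ast})\|_2^2.
\end{align*}
A direct Gaussian tail bound (via $1-\Phi(x)\le e^{-x^2/2}$) shows both the Type~I and Type~II errors of this pairwise test are at most $\exp(-\|\tilde{\bX}_{\bPhi,N}\bB(\bgamma'-\bgamma^{\ast})\|_2^2/8)$. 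I would take $\zeta_N$ to be the maximum of such NP tests over a minimal cover of $\mathcal{B}_N^c$ intersected with a sieve (on which the prior concentrates up to $o(\exp(-M_N\theta_N^2))$ mass), measured in the seminorm $\bgamma\mapsto \|\tilde{\bX}_{\bPhi,N}\bB\bgamma\|_2$.

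The crux, and the main obstacle, is a uniform lower bound on the separation $\|\tilde{\bX}_{\bPhi,N}\bB(\bgamma-\bgamma^{\ast})\|_2^2$ for $\bgamma\in\mathcal{B}_N^c$ large enough to dominate the target exponent $M_N\theta_N^2$. I would chain three ingredients. \emph{(i)} Assumption~(E) gives $\|\tilde{\bX}_{\bPhi,N}\bB(\bgamma-\bgamma^{\ast})\|_2^2 \asymp \kappa_N \|\tilde{\bX}_N\bB(\bgamma-\bgamma^{\ast})\|_2^2$. \emph{(ii)} Assumption~(D) (bounded predictors) together with a law-of-large-numbers argument for the i.i.d.\ design $\{\bu_n\}$ yields $\|\tilde{\bX}_N\bB(\bgamma-\bgamma^{\ast})\|_2^2 \asymp N\|\bw-\bw^{\ast}\|_2^2$, where $\bw$ and $\bw^{\ast}$ are the varying coefficient vectors associated with $\bgamma$ and $\bgamma^{\ast}$ via (\ref{basis}). \emph{(iii)} The B-spline/Riesz property underlying Assumption~(A) gives $\|\bw-\bw^{\ast}\|_2^2 \gtrsim H_N^{-1}\|\bgamma-\bgamma^{\ast}\|_2^2$. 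Combining with $\|\bgamma-\bgamma^{\ast}\|_2 > C_{2w}\theta_N H_N^{1/2}$ yields $\|\tilde{\bX}_{\bPhi,N}\bB(\bgamma-\bgamma^{\ast})\|_2^2 \gtrsim C_{2w}^2 \kappa_N N \theta_N^2$, and invoking the scaling of $\kappa_N N$ against $M_N$ provided by Assumption~(E) (for the $\theta_N$ dictated by the downstream posterior contraction theorem) converts this into $\exp(-c M_N\theta_N^2)$ per pair once $C_{2w}$ is chosen large enough.

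The uniform statement then follows by a standard union-bound: the Type~I error is at most (covering number)$\times \exp(-c M_N\theta_N^2)$, and the Type~II error at any $\bgamma\in\mathcal{B}_N^c$ in the sieve is handled by moving to the nearest cover point and invoking the triangle inequality in the seminorm. The log-covering number of the relevant ball in $\mathbb{R}^{H_N\tilde P}$ is at most $C H_N\tilde P\log(1/\delta_N)$, which is of smaller order than $M_N\theta_N^2$ by Assumption~(B) (since $H_N\asymp M_N^{1/(2\xi+d)}$ and $M_N\theta_N^2\to\infty$). The fiddliest part of a rigorous write-up is step~(ii): because $\bgamma-\bgamma^{\ast}$ varies over the cover, the LLN must be upgraded to a uniform concentration inequality (for instance Bernstein-type, exploiting the uniform boundedness of the covariates and basis functions), and one must exclude near-collinearity in the columns of $\tilde{\bX}_N\bB$ so that the Riesz constant from (iii) is genuinely attained after the design averaging.
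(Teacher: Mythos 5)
Your proposal is a valid route but it is genuinely different from the paper's. The paper does not use pairwise Neyman--Pearson tests plus a covering argument at all: it builds a \emph{single} global test $\zeta_N=I\bigl(\|\tilde{\bX}_{\bPhi,\bB,N}\widehat{\bgamma}-\tilde{\bX}_{\bPhi,\bB,N}\bgamma^*\|_2\gtrsim \theta_N M_N^{1/2}\bigr)$ based on the least-squares fit $\widehat{\bgamma}=(\tilde{\bX}_{\bPhi,\bB,N}^{\T}\tilde{\bX}_{\bPhi,\bB,N})^{-1}\tilde{\bX}_{\bPhi,\bB,N}^{\T}\by_{\bPhi,N}$, so that both error probabilities reduce to tails of the quadratic form $\bepsilon^{\T}\bP_{\tilde{\bX}_{\bPhi,\bB,N}}\bepsilon$, a $\chi^2$ with at most $H_N\tilde P\prec M_N\theta_N^2$ degrees of freedom, controlled via Laurent--Massart. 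The union over alternatives is handled for free by the triangle inequality in the seminorm $\|\tilde{\bX}_{\bPhi,\bB,N}\cdot\|_2$ together with the same separation bound you derive in your steps (i)--(iii) (Assumption~(E), the design averaging, and the B-spline Riesz property via Lemma A.1 of Huang), so no entropy bookkeeping is needed. Your covering-number control $H_N\tilde P\log(1/\delta_N)\prec M_N\theta_N^2$ plays exactly the same role as the paper's degrees-of-freedom bound $H_N\tilde P\prec M_N\theta_N^2$; your approach is more portable beyond the linear-Gaussian setting, while the paper's exploits that structure to stay elementary. Your closing caveat about upgrading the design LLN to a uniform statement is fair --- the paper itself only establishes the separation in expectation over $\mathcal{U}$ and $\mathcal{X}$.

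One concrete ingredient is missing from your sketch: the Type~I error in the lemma is taken under $f^{\ast}(\cdot\given\bgamma^{\ast})$, the \emph{true} data-generating law of $\by_{\bPhi,N}$, not under the working model at $\bgamma^{\ast}$. Under $f^{\ast}$ the mean is $\tilde{\bX}_{\bPhi,N}(\bB\bgamma^{\ast}+\bet^{\ast})$ with $\bet^{\ast}$ the spline approximation error, and the noise covariance is $\bPhi\bPhi^{\T}$ rather than $\bI_{M_N}$. Your pairwise NP tests are calibrated against $N(\tilde{\bX}_{\bPhi,N}\bB\bgamma^{\ast},\bI)$, so the stated Gaussian tail bound does not directly give $\mathbb{E}^{\ast}(\zeta_N)$. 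Both mismatches are repairable --- the covariance via Lemma~\ref{lem1} / Assumption~(C) ($e_{\max}(\bPhi\bPhi^{\T})\asymp 1$), and the mean shift via the bound $\mathbb{E}_{\mathcal{U}}\mathbb{E}_{\mathcal{X}}\|\tilde{\bX}_{\bPhi,N}\bet^{\ast}\|_2^2\lesssim N\kappa_N H_N^{-2\xi}\prec M_N\theta_N^2$, which degrades the per-pair exponent only by a lower-order term absorbable by enlarging $C_{2w}$ --- but they must be addressed explicitly, as the paper does when it splits $\|\bP_{\tilde{\bX}_{\bPhi,\bB,N}}\tilde{\bX}_{\bPhi,N}\bet^{\ast}+\bP_{\tilde{\bX}_{\bPhi,\bB,N}}\bepsilon\|_2^2$ into its bias and noise parts.
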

\begin{proof}
See Section~\ref{sec:lem3} in the Appendix.
\end{proof}

We use the above results to establish the posterior contraction result for the proposed model.
\begin{theorem}\label{thm_main}
Under Assumptions~(A)-(F), our proposed model (\ref{eq: svc_basic_compressed_bhm}) satisfies $$\max_{j=1,...,\tilde{P}}\sup_{w_j^*\in\mathcal{F}_{\xi}(\mathcal{D})}\mathbb{E}^{\ast}\Pi_N(\mathcal{C}_N\given \by_{\bPhi,N},\tilde{\bX}_{\bPhi,N})\rightarrow 0, \text{as } N,M_N\rightarrow\infty,$$  with the posterior contraction rate $\theta_N\asymp M_N^{-\xi/(2\xi+d)}$.
\end{theorem}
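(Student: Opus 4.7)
The plan is to follow the standard Ghosal--Ghosh--van der Vaart posterior contraction paradigm, using Lemma~\ref{lem2} as the ``prior mass / KL neighborhood'' bound for the denominator and Lemma~\ref{lem3} as a global exponential test that already supplies uniform control over the alternative in a single shot (so no separate sieve/entropy argument is required). The first thing I would do is write the standard upper bound
\begin{align*}
\mathbb{E}^{\ast}\Pi_N(\mathcal{C}_N\mid\by_{\bPhi,N},\tilde{\bX}_{\bPhi,N})
\;\leq\; \mathbb{E}^{\ast}\zeta_N + P^{\ast}(\mathcal{A}_N) + \mathbb{E}^{\ast}\!\left[(1-\zeta_N)\,\Pi_N(\mathcal{C}_N\mid\cdot)\,\mathbf{1}_{\mathcal{A}_N^{c}}\right].
\end{align*}
Lemma~\ref{lem3} makes the first term $\lesssim \exp(-M_N\theta_N^2)$, while Lemma~\ref{lem2} makes $P^{\ast}(\mathcal{A}_N)\to 0$. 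The remaining task is the third term.

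Next I would bound the denominator from below on $\mathcal{A}_N^{c}$ by $f^{\ast}(\by_{\bPhi,N}\mid\bgamma^{\ast})\exp(-CM_N\theta_N^2)$ (as supplied by Lemma~\ref{lem2}) and swap expectation and integration by Fubini, yielding
\begin{align*}
\mathbb{E}^{\ast}\!\left[(1-\zeta_N)\Pi_N(\mathcal{C}_N\mid\cdot)\mathbf{1}_{\mathcal{A}_N^{c}}\right]
\;\leq\; e^{CM_N\theta_N^2}\int_{\{\bgamma:\bw\in\mathcal{C}_N\}}\mathbb{E}_{\bgamma}(1-\zeta_N)\,\pi_N(\bgamma)\,d\bgamma.
\end{align*}
If the parameter image $\{\bgamma:\bw\in\mathcal{C}_N\}$ is contained in $\mathcal{B}_N^{c}$, the second half of Lemma~\ref{lem3} gives $\sup_{\bgamma\in\mathcal{B}_N^{c}}\mathbb{E}_{\bgamma}(1-\zeta_N)\lesssim \exp(-M_N\theta_N^2)$, and the resulting bound $e^{(C-1)M_N\theta_N^2}$ tends to zero provided $C<1$, which Lemma~\ref{lem2} allows us to choose.

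The main technical step, and the place where the promised rate $\theta_N\asymp M_N^{-\xi/(2\xi+d)}$ is pinned down, is verifying the inclusion $\mathcal{C}_N \subseteq \{\bgamma:\bgamma\in\mathcal{B}_N^{c}\}$. For this I would use Assumption~(A) together with the B-spline Gram matrix behavior: for the population Gram $\bSigma_j = \int \bB_j(\bu)\bB_j(\bu)^{\T}g(\bu)d\bu$ under the bounded density $g$, the eigenvalues scale like $H_N^{-1}$, so $\|\bB_j^{\T}\balpha\|_2 \asymp \|\balpha\|_2/H_N^{1/2}$. Writing
\begin{align*}
\|w_j-w_j^{\ast}\|_2 \;\leq\; \|\bB_j^{\T}(\bgamma_j-\bgamma_j^{\ast})\|_2 + \|\bB_j^{\T}\bgamma_j^{\ast}-w_j^{\ast}\|_2,
\end{align*}
the second summand is $O(H_N^{-\xi})=O(\theta_N)$ by Assumption~(A), so for $\tilde{C}$ large enough, $\|\bw-\bw^{\ast}\|_2 > \tilde{C}\theta_N$ forces $\sum_j \|\bB_j^{\T}(\bgamma_j-\bgamma_j^{\ast})\|_2 \gtrsim \theta_N$, hence $\|\bgamma-\bgamma^{\ast}\|_2 \gtrsim \theta_N H_N^{1/2}$; this is exactly $\bgamma\in\mathcal{B}_N^{c}$ after tuning $C_{2w}$. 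The rate $\theta_N=M_N^{-\xi/(2\xi+d)}$ then emerges from matching the spline bias rate $H_N^{-\xi}$ with the ``variance'' contribution $H_N/M_N$ coming from the effective sample size after sketching (cf.\ Assumption~(E) with $N\kappa_N\prec M_N$), under the choice $H_N\asymp M_N^{1/(2\xi+d)}$ fixed by Assumption~(B).

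The hard part will be the bias-translation step above: every constant has to be tracked so that the $\mathcal{C}_N$-to-$\mathcal{B}_N^{c}$ inclusion holds, and one has to be careful that the Gram matrix eigenvalue scaling works uniformly in $j$ and that the approximation from Assumption~(A) can be localized in $L_2(g)$ rather than sup norm. A secondary subtlety is that Lemmas~\ref{lem2}~and~\ref{lem3} are both stated conditionally on $\bPhi$ satisfying the random-matrix bound in Assumption~(C), which is itself a high-probability event; per the discussion following Assumption~(F), one treats this as a probability-one event, so no additional union bound on $\bPhi$ is needed at this stage.
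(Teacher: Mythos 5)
Your proposal is correct and follows essentially the same route as the paper's proof: the same three-term decomposition via $\zeta_N$ and $\mathcal{A}_N$, the same Fubini argument pairing the denominator lower bound from Lemma~\ref{lem2} (with its free constant chosen smaller than the test's exponent) against $\sup_{\bgamma\in\mathcal{B}_N^c}\mathbb{E}_{\bgamma}(1-\zeta_N)$ from Lemma~\ref{lem3}, and the same reduction of $\mathcal{C}_N$ to $\mathcal{B}_N^c$ via the triangle inequality, the Assumption~(A) bias bound, and the B-spline Gram-matrix scaling $\|\bB^{\T}(\bgamma-\bgamma^*)\|_2\asymp\|\bgamma-\bgamma^*\|_2H_N^{-1/2}$ (Lemma~A.1 of Huang et al.). No gaps to flag.
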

\begin{proof}
See Section~\ref{sec:thm_main} in the Appendix for the detailed proof. Here we offer an outline of the proof. The steps are given below.\\
\emph{Step 1:} Using basis expansion of each $w_j$ and by Assumption (A), $\{\bw:||\bw-\bw^*||_2\geq\tilde{C}\theta_N\}\subset \{\bgamma:||\bgamma-\bgamma^*||_2H_N^{-1/2}\geq C_{2w}\theta_N\}=\mathcal{B}_N^c$, for some constant $C_{2w}>0$.\\
\emph{Step 2:} Consider the set $\mathcal{A}_N$ defined in Lemma~\ref{lem2} and the sequence of random variables $\zeta_N$ defined in Lemma~\ref{lem3}.
Note that $\mathbb{E}^*(\zeta_N)\rightarrow 0$ as $N\rightarrow\infty$, by Lemma~\ref{lem3} and $P^*(\mathcal{A}_N)\rightarrow 0$ as $N\rightarrow\infty$, by Lemma~\ref{lem2}.\\
\emph{Step 3:} We then consider the expression $\mathbb{E}^*[\Pi(\mathcal{B}_N^c\given \by_{\bPhi,N},\tilde{\bX}_{\bPhi,N})(1-\zeta_N)1_{y_N\in\mathcal{A}_N^c}]\\=\mathbb{E}^*\left[1_{\by_N\in\mathcal{A}_N^c}\frac{\left\{(1-\zeta_N)\int_{\mathcal{B}_N^c}\{f(\by_{\bPhi,N}|\bgamma)/f^*( \by_{\bPhi,N}|\bgamma^*)\}\pi_N(\bgamma)d\bgamma\right\}}{\left\{\int\{f(\by_{\bPhi,N}|\bgamma)/f^*( \by_{\bPhi,N}|\bgamma^*)\}\pi_N(\bgamma)d\bgamma\right\}}\right]$. Due to Step 2, it only suffices to show that this expression converges to $0$ as $N\rightarrow\infty$.\\
\emph{Step 4:} Under Assumptions (A)-(F), the numerator of the above expression decays exponentially to $0$ as a function of $M_N\theta_N^2$. The inverse of the denominator grows at a slower rate of $M_N\theta_N^2$. The result is then proved by considering $M_N\theta_N^2=o(N).$ 
\end{proof}

Since $\theta_N\rightarrow 0$ as $N\rightarrow\infty$, the model consistently estimates the true varying coefficients under the integrated $L_2$-norm. Further, data compression decreases the effective sample size from $N$ to $M_N$, hence, the contraction rate $\theta_N$ obtained in Theorem~\ref{thm_main} is optimal and adaptive to the smoothness of the true varying coefficients. Our next theorem justifies the two-stage prediction strategy described in Section~\ref{sec:eff_post}. 
\begin{theorem}\label{thm_main2}
For any input $\bu_0$ drawn randomly with the density $g$ and corresponding predictors $\tilde{x}_1(\bu_0),\ldots,\tilde{x}_{\tilde{P}}(\bu_0)$, let $f_u$ be the predictive density $p(y(\bu_0)\given \tilde{x}_1(\bu_0),\ldots,\tilde{x}_{\tilde{P}}(\bu_0),w(\bu_0))$ derived from (\ref{eq: svc_basic_general}) without data compression. Let $f^{\ast}$ be the true data generating model (i.e., (\ref{eq: svc_basic_general}) with $\bw(\bu_0)$ fixed at $\bw^{\ast}(\bu_0)$). Given $\bu_0$ and $\tilde{x}_1(\bu_0),\ldots,\tilde{x}_{\tilde{P}}(\bu_0)$, define $h(f_u,f^{\ast})=\int (\sqrt{f_u}-\sqrt{f^{\ast}})^2$ as the Hellinger distance between the densities $f_u$ and $f^{\ast}$. Then 
\begin{align}
\mathbb{E}^{\ast}\mathbb{E} \mathbb{E}_{\mathcal{U}}[h(f_u,f^*)\given \tilde{\bX}_{\bPhi,N},\by_{\bPhi,N}]\rightarrow 0,\:\:\mbox{as}\:\:N,M_N\rightarrow\infty,
\end{align}
where $\mathbb{E}_{\mathcal{U}}$, $\mathbb{E}$ and $\mathbb{E}^{\ast}$ stand for expectations with respect to the density $g$, the posterior density $\Pi_N(\cdot|\tilde{\bX}_{\bPhi,N},\by_{\bPhi,N})$ and the true data generating distribution, respectively.
\end{theorem}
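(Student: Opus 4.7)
\emph{Proof proposal.} The plan is to reduce the predictive Hellinger bound to the contraction object $\|\bw-\bw^{\ast}\|_2$ controlled by Theorem~\ref{thm_main}. Under Assumption~(F), $f_u$ and $f^{\ast}$ are univariate Gaussians with common unit variance and means $\tilde{\bx}(\bu_0)^{\T}\bw(\bu_0)$ and $\tilde{\bx}(\bu_0)^{\T}\bw^{\ast}(\bu_0)$, respectively. The closed-form Hellinger integral for same-variance Gaussians combined with $1-e^{-x}\leq x$ yields, in the paper's convention $h(f_u,f^{\ast})=\int(\sqrt{f_u}-\sqrt{f^{\ast}})^2$,
\begin{align*}
h(f_u,f^{\ast})=2\bigl\{1-\exp\bigl(-\tfrac{1}{8}(\tilde{\bx}(\bu_0)^{\T}(\bw(\bu_0)-\bw^{\ast}(\bu_0)))^2\bigr)\bigr\}\leq \tfrac{1}{4}\bigl(\tilde{\bx}(\bu_0)^{\T}(\bw(\bu_0)-\bw^{\ast}(\bu_0))\bigr)^2.
\end{align*}
Cauchy-Schwarz together with the uniform bound $|\tilde{x}_j(\bu_0)|\leq 1$ from Assumption~(D) then gives $h(f_u,f^{\ast})\leq (\tilde{P}/4)\sum_{j=1}^{\tilde{P}}(w_j(\bu_0)-w_j^{\ast}(\bu_0))^2$.

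Integrating against the density $g$ of $\bu_0$, and using Fubini to interchange $\mathbb{E}_{\mathcal{U}}$ with the posterior expectation $\mathbb{E}[\cdot\given\tilde{\bX}_{\bPhi,N},\by_{\bPhi,N}]$ (legitimate because $\bu_0$ is drawn independently of the data and posterior), produces for any fixed posterior draw $\bw$,
\begin{align*}
\mathbb{E}_{\mathcal{U}}[h(f_u,f^{\ast})\given \bw]\leq \tfrac{\tilde{P}}{4}\sum_{j=1}^{\tilde{P}}\|w_j-w_j^{\ast}\|_2^2\leq \tfrac{\tilde{P}}{4}\|\bw-\bw^{\ast}\|_2^2,
\end{align*}
where the last step exploits the paper's convention $\|\bw-\bw^{\ast}\|_2=\sum_j\|w_j-w_j^{\ast}\|_2$ together with the elementary fact that $\sum_j a_j^2\leq(\sum_j a_j)^2$ for non-negative $a_j$.

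The final step is to split the posterior expectation over the contraction event $\mathcal{C}_N=\{\bw:\|\bw-\bw^{\ast}\|_2>\tilde{C}\theta_N\}$ and its complement, applying $\|\bw-\bw^{\ast}\|_2\leq \tilde{C}\theta_N$ on $\mathcal{C}_N^c$ and the universal bound $h\leq 2$ on $\mathcal{C}_N$ (immediate from $\int\sqrt{f_u f^{\ast}}\geq 0$). This gives
\begin{align*}
\mathbb{E}\bigl[\mathbb{E}_{\mathcal{U}}[h(f_u,f^{\ast})\given \bw]\,\big|\,\tilde{\bX}_{\bPhi,N},\by_{\bPhi,N}\bigr]\leq \tfrac{\tilde{P}\tilde{C}^2}{4}\theta_N^2+2\,\Pi_N(\mathcal{C}_N\given \tilde{\bX}_{\bPhi,N},\by_{\bPhi,N}),
\end{align*}
and taking $\mathbb{E}^{\ast}$ drives both terms to zero, by $\theta_N\to 0$ and by Theorem~\ref{thm_main}, respectively. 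The only mildly subtle point—and the closest thing to an obstacle—is recognizing that the paper's $\|\bw-\bw^{\ast}\|_2$ is a sum of coordinate $L_2$-norms rather than a genuine Euclidean norm of a vector-valued function, so the non-negativity inequality above is what lets the Hellinger-induced quantity $\sum_j\|w_j-w_j^{\ast}\|_2^2$ be dominated by the object controlled in Theorem~\ref{thm_main}; beyond this bookkeeping, no new analytic machinery is required and the theorem reduces cleanly to its predecessor.
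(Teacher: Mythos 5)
Your proof is correct and follows essentially the same route as the paper's: express the Hellinger distance between the two unit-variance Gaussians in closed form, bound the mean difference via Cauchy--Schwarz and the uniform bound $|\tilde{x}_j(\bu_0)|\leq 1$, integrate over $\bu_0$, then split the posterior expectation over $\mathcal{C}_N$ and its complement and invoke Theorem~\ref{thm_main}. The only cosmetic differences are that you linearize via $1-e^{-x}\leq x$ where the paper keeps the exponential form and applies Jensen's inequality, and that you correctly carry the factor of $2$ implied by the stated convention $h=\int(\sqrt{f_u}-\sqrt{f^{\ast}})^2$ (which the paper silently drops); neither affects the conclusion.
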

\begin{proof}
See Section~\ref{sec:thm_main2} in the Appendix.
\end{proof}
The theorem states that the predictive density of the VCM model in (\ref{eq: svc_basic_general}) is arbitrarily close to the true predictive density even when we plug-in inference on parameters from (\ref{eq: svc_basic_compressed_bhm}). 

\section{Simulation Results}\label{sec:empirical}
\subsection{Inferential performance}\label{sec: inferential_performance_sim}

We empirically validate our proposed approach using (\ref{eq: svc_basic_compressed_bhm}) for $d=2$, i.e., for the spatially varying coefficient models. The approach, henceforth abbreviated as \textit{geoS}, is compared with the uncompressed model (\ref{eq: stack_model}) on some simulated data in terms of inferential performance and computational efficiency.  We simulate data by using a fixed set of spatial locations $\bu_1,\ldots,\bu_N$ that were drawn uniformly over the domain $\mathcal{D} = [0,1]\times [0,1]$. We set $\tilde{P}=P=3$ and assume $\bbeta=0$, i.e., all predictors have purely space-varying coefficients. We set $\tilde{x}_1(\bu_i)=1$, for all $i=1,\ldots,N$, while the values of $\tilde{x}_j(\bu_1),\ldots,\tilde{x}_j(\bu_N)$ for $j=2,3$ were set to independently values from $N(0,1)$. For each $n=1,\ldots,N$, the response $y(\bu_n)$ is drawn independently from $N(w_1^*(\bu_n)+w_2^*(\bu_n)\tilde{x}_2(\bu_n)+w_3^*(\bu_n)\tilde{x}_3(\bu_n),\sigma^{*2})$ following (\ref{eq: stack_model}), where $\sigma^{*2}$ is set to be $0.1$. The true space-varying coefficients ($w^*_j(\bu)$s) are simulated from a Gaussian process with mean $0$ and covariance kernel $C(\cdot,\cdot;\theta_j)$, i.e., $(w_j^*(\bu_1),...,w_j^*(\bu_N))^{\T}$ is drawn from $N(0,C^*(\theta_j))$, for each $j=1,\ldots,\tilde{P}$, where $C^*(\theta_j)$ is an $N\times N$ matrix with the $(n,n')$th element $C(\bu_n,\bu_{n'};\theta_j)$. We set the covariance kernel $C(\cdot,\cdot;\theta_j)$ to be the exponential covariance function given by

\begin{align}\label{eq: cov_fn}
C(\bu,\bu';\theta_j) = \delta^2_j \exp\left\lbrace -\frac{1}{2} \left( \frac{||\bu-\bu'||}{\phi_j}\right) \right\rbrace,\:\:j=1,2,3,
\end{align}
with the true values of $\delta_1^2,\delta_2^2,\delta_3^2$ set to $1, 0.8, 1.1$, respectively. We fix the true values of $\phi_1,\phi_2,\phi_3$ at $1, 1.25, 2$, respectively.

While fitting \emph{geoS} and its uncompressed analogue (\ref{eq: stack_model}), the varying coefficients are modeled through the linear combination of $H$ basis functions as in (\ref{basis}), where these basis functions are chosen as the tensor-product of B-spline bases of order $q=4$ \citep{shen2015adaptive}. More specifically, for $\bu=(u^{(1)},u^{(2)})$, the $j$-th varying coefficient is modeled as 
\begin{align}\label{eq: tensor_prod}
w_j(\bu)=\sum_{h_1=1}^{H_1}\sum_{h_2=1}^{H_2}B_{jh_1}^{(1)}(u^{(1)})B_{jh_2}^{(2)}(u^{(2)})\gamma_{jh_1h_2}\;,
\end{align}
where the marginal B-splines $B_{jh_1}^{(1)}$, $B_{jh_2}^{(2)}$ are defined on sets of $H_1$ and $H_2$ knots, respectively. The knots are chosen to be equally-spaced so the entire set of $H= H_1H_2$ knots is uniformly spaced over the domain $\mathcal{D}$. We complete the hierarchical specification by assigning independent $IG(2, 0.1)$ priors (mean $0.1$ with infinite variance) for $\sigma^2$ and $\tau_j^2$ for each $j=1,\ldots,P$.  

We implemented our models in the \texttt{R} statistical computing environment on a Dell XPS 13 PC with Intel Core i7-8550U CPU @  4.00GHz  processors at 16 GB of RAM. For each of our simulation data sets we ran a single-threaded MCMC chain for 5000 iterations. Posterior inference was based upon 2000 samples retained after adequate convergence was diagnosed using Monte Carlo standard errors and effective sample sizes (ESS) using the \texttt{mcmcse} package in \texttt{R}. Source codes for these experiments are available from \url{https://github.com/LauraBaracaldo/Bayesian-Data-Sketching-in-Spatial-Regression-Models}.

{
\small
\begin{table}[ht]
\centering
\scalebox{0.8}{
\begin{tabular}{p{4cm}p{4cm}p{2.5cm}}
  \hline
  \hline
   & \multicolumn{2}{c}{$N=5000$, $H = 225$, $K=50$ } \\
   \cline{2-3}
   \\[-10pt]
   & \multicolumn{1}{l}{\textit{(geoS)} $M=700$} & \textit{Uncompressed} \\
   \hline
   $MSE$ \textit{(SVC)} & 0.0335 (0.028, 0.039) & 0.0109 \\
   \textit{95\% CI length} & 0.6751 (0.654, 0.723) & 0.2441 \\
   \textit{95\% CI coverage} & 0.9406 (0.913, 0.959) & 0.9520 \\
   $MSPE$ & 0.1986 (0.174, 0.231) & 0.1369 \\
   \textit{95\% PI length} & 1.6449 (1.567, 1.755) & 1.3071 \\
   \textit{95\% PI coverage} & 0.9400 (0.912, 0.962) & 0.9380 \\
   \textit{Computation efficiency} & 2.1165 (1.643, 2.152) & 0.6298\\
   \hline
\end{tabular}}

\vspace{1em} 

\scalebox{0.8}{
\begin{tabular}{p{4cm}p{4cm}p{2.5cm}}
  \hline
  \hline
   & \multicolumn{2}{c}{$N=10000$, $H = 256$, $K=50$} \\
   \cline{2-3}
   \\[-10pt]
   & \multicolumn{1}{l}{\textit{(geoS)} $M=1000$} & \textit{Uncompressed} \\
   \hline
   $MSE$ \textit{(SVC)} & 0.0238 (0.019, 0.028)& 0.0092 \\
   \textit{95\% CI length} & 0.6101 (0.591, 0.631) & 0.2837 \\
   \textit{95\% CI coverage} & 0.9253 (0.920, 0.960) & 0.9500 \\
   $MSPE$ & 0.1737 (0.156, 0.191) & 0.1260 \\
   \textit{95\% PI length} & 1.6013 (1.534, 1.653) & 1.3770 \\
   \textit{95\% PI coverage} & 0.9460 (0.928, 0.965) & 0.9510 \\
   \textit{Computation efficiency} & 2.2368 (2.101, 2.288) & 0.4981 \\
   \hline
   \hline
\end{tabular}}

\vspace{1em} 

\scalebox{0.8}{
\begin{tabular}{p{4cm}p{4cm}p{2.5cm}}
  \hline
  \hline
   & \multicolumn{2}{c}{$N=100000$, $H = 400$, $K=10$} \\
   \cline{2-3}
   \\[-10pt]
   & \multicolumn{1}{l}{\textit{(geoS)} $M=3200$} & \textit{Uncompressed} \\
   \hline
   $MSE$ \textit{(SVC)} & 0.0067 (0.003, 0.008) & 0.0008 \\
   \textit{95\% CI length} & 0.3007 (0.221, 0.310) & 0.1712 \\
   \textit{95\% CI coverage} & 0.9360 (0.926, 0.941) & 0.953 \\
   $MSPE$ & 0.1242 (0.115, 0.131)  & 0.112 \\
   \textit{95\% PI length} & 1.3503 (1.290, 1.381) & 1.239 \\
   \textit{95\% PI coverage} & 0.9510 (0.942, 0.956) & 0.937 \\
   \textit{Computation efficiency} & 5.9081 (5.814, 6.001) & 0.981 \\
   \hline
   \hline
\end{tabular}}
\caption{Summary results: 50\% (2.5\%, 97.5\%) over $K$ simulations for the compressed \textit{geoS} model. Median values for each metric over $K$ simulations are presented for the uncompressed model. Mean Squared Error (MSE), length and coverage of 95\% CI for the spatially varying coefficients are presented. We also provide mean squared prediction error (MSPE), coverage and length of 95\% predictive intervals for competing models.}
\label{table:tablesim1}
\end{table}

}

Table~\ref{table:tablesim1} summarizes the estimates of the varying coefficients and the predictive performance for \textit{geoS} in comparison to the uncompressed model. These results are based on $K$ independently generated data sets for each scenario, with $N=5,000$ (case 1), $N=10,000$ (case 2), and $N=100,000$ (case 3). For each case, the compressed dimension is taken to be $M\approx 10\sqrt{N}$, which seems to be effective from empirical considerations in our simulations. We provide further empirical justification for this choice in Section~\ref{sec: dimension_compression}. Our \textit{geoS} approach compresses the sample sizes to $M=700$, $M=1000$  and $M=3000$ in cases~1, 2 and 3, respectively. The number of fitted basis functions in cases 1, 2~and~3 are $H=225, 256, 400$, respectively. For each simulated data, we evaluated 6 model assessment metrics and 1 computational efficiency metric that are listed in Table~\ref{table:tablesim1}. We present the median, 2.5, and 97.5 quantiles for each of the metrics on the $K$ data sets. We see that \textit{geoS} offers competitive inferential performance and outperforms the uncompressed model. For example, the confidence interval for some of the metrics, while including the values for the uncompressed model, often reveal heavier mass to the left of the value for the uncompressed model.  

\begin{figure}[t]
    \centering
    \includegraphics[width=11cm]{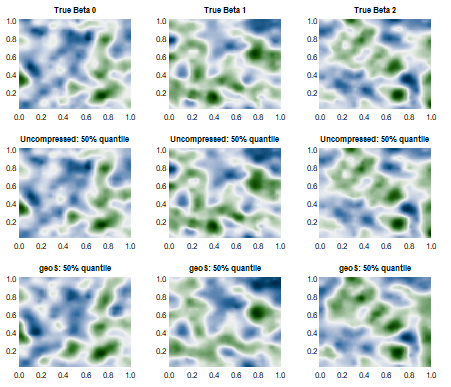}
    \caption{Simulation case 1: $(N, H)=(5000, 225)$. Two-dimensional true and predicted surfaces over the unit square $\mathcal{D} = [0,1]\times [0,1]$. First row corresponds to the surfaces of true space-varying coefficients $\beta^*_p(s)$, $p=1, 2, 3$. Rows 2 and 3 correspond to the predicted 50\% quantile surfaces for the uncompressed and compressed \textit{geoS} models respectively.}
    \label{fig:SufBetas1}
\end{figure}

\begin{figure}[t]
    \centering
    \includegraphics[width=11cm]{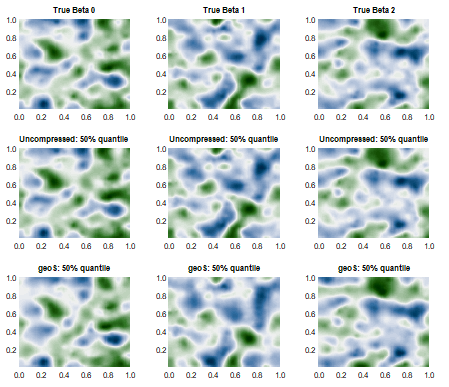}
    \caption{Simulation case 1: $(N, H)=(10000, 256)$. Two-dimensional true and predicted surfaces over the unit square $\mathcal{D} = [0,1]\times [0,1]$. First row corresponds to the surfaces of true space-varying coefficients $\beta^*_p(s)$, $p=1, 2, 3$. Rows 2 and 3 correspond to the predicted 50\% quantile surfaces for the uncompressed and compressed \textit{geoS} models respectively.}
    \label{fig:SufBetas2}
\end{figure}

Figures~\ref{fig:SufBetas1}~and~\ref{fig:SufBetas2} present the estimated varying coefficients {in one representative simulation experiment} by \emph{geoS} and the uncompressed data model for cases~1~and~2, respectively. These figures reveal point estimates that are substantively similar to those from \emph{geoS} and the uncompressed model. The mean squared error of estimating varying coefficients, defined as  $\sum_{j=1}^3\sum_{n=1}^N(\widehat{w}_j(\bu_n)-w_j^*(\bu_n))^2/(3N)$ (where $\widehat{w}_j(\bu_n)$ is the posterior median of $w_j(\bu_n)$), also confirms very similar point estimates offered by the compressed and uncompressed models (see Table~\ref{table:tablesim1}). Further, \textit{geoS} offers close to nominal coverage for 95\% credible intervals for varying coefficients, with little wider credible intervals compared to uncompressed data model. This can be explained by the smaller sample size for the \textit{geoS} model, though the difference turns out to be minimal. We also carry out predictive inference using \textit{geoS} (Section~\ref{sec:eff_post}). Table~\ref{table:tablesim1} presents mean squared predictive error (MSPE), average length and coverage for the 95\% predictive intervals, based on $N^*=500$ out of the sample observations. We find \emph{geoS} delivers posterior predictive estimates and predictive coverage that are very consistent with the uncompressed model, perhaps with marginally wider predictive intervals than those without compression. Finally, the computational efficiency of both models are computed based on the metric $\log_2(ESS/\mbox{Computation Time})$, where $ESS$ denotes the effective sample size averaged over the MCMC samples of all parameters. We find \textit{geoS} is almost $240\%$, $350\%$ and $500\%$ more efficient than the uncompressed model for $N=5,000$, $N=10,000$  and $N=100,000$, respectively, while delivering substantively consistent inference on the spatial effects.

 {
 We conduct an additional experiment where we compare our method with a sparse Gaussian Process (GP) model. For each $n=1,\ldots$ the simulated response $y(\bu_n)$ is drawn independently from $N(w^*(\bu_n),\sigma^{*2})$. Table~\ref{table:sparseGP} summarizes results in terms of inferential performance and computational efficiency for \textit{geoS} compared to the \textit{predictive process model} \citep[e.g.,][]{banerjee2008gaussian}. The predictive process achieves a reduction in computational complexity by projecting the original Gaussian Process (GP) onto a lower-dimensional subspace defined by a set of knots or inducing points, and is envisioned as a sparse GP model. It is implemented using the \texttt{spBayes} package in \texttt{R}. Notably, the \texttt{spBayes} package only allows fitting a varying intercept model with GP or predictive process fitted on the varying intercept, which prompted us to simulate the data as above. Our findings indicate that our method and the predictive process exhibit very similar inferential performance in terms of accuracy and predictive capability. However, $geoS$ demonstrates superior computational efficiency across all evaluated scenarios. Unlike the sparse GP, which experiences increased computational demands with larger $M$, \textit{geoS} maintains a consistent computational profile in terms of scalability and offers a more practical method for handling very large datasets.
 }

\begin{table}[!ht]
\centering
\begin{tabular}{c|c|p{2cm}|p{2cm}}
\hline
    & & $GeoS$ & $Sparse GP$ \\ \hline
\multirow{4}{*}{  $M=710$ } & $MSPE$ & 0.110  & 0.103  \\ 
                       & \textit{95\% PI length} & 1.320 & 1.503\\
                       & \textit{95\% PI coverage} & 0.940 & 0.942 \\
                       & \textit{Time (secs)}  & 17.47   & 5190.77\ \\ \hline
\multirow{4}{*}{$M=337$ } & $MSPE$ & 0.120  & 0.111 \\ 
                           & \textit{95\% PI length} & 1.396 & 1.316 \\
                       & \textit{95\% PI coverage} &  0.932 & 0.944 \\

                       & \textit{Time (secs)}&  14.23 & 1217.64 \\ \hline
\multirow{4}{*}{ $M=142$ } & $MSPE$ & 0.146 & 0.128 \\  
& \textit{95\% PI length} & 1.561 & 1.414\\
                       & \textit{95\% PI coverage} & 0.918 & 0.931\\
                       & \textit{Time (secs)}& 12.121  & 242.67 \\ \hline
\end{tabular}
\caption{Performance comparison of GeoS and Predictive Process for different values of $M =k\sqrt{N}$, for $k=2,5,10$ and $N=5000$}
\label{table:sparseGP}
\end{table}

\subsection{Choice of the dimension of the compression matrix $M$}\label{sec: dimension_compression}
We present investigations into the appropriate compression matrix size $M$. For simulated data with sample size $N=100000$, we ran our model for different values of $M=k\sqrt{N}$, $k=1,\ldots, 20$. Figure~\ref{fig:Mchoice} shows the variations in the prediction of points and intervals reflected in the $MSPE$ and 95\% predicted interval coverage and length, respectively. Unsurprisingly, as $M$ increases the MSPE drops with a decreased rate of decline until $k\sim 10$. In terms of interval prediction, the predictive coverage seems to oscillate within the narrow interval $(0.9, 0.97)$ for all values of $M$, but the length of the predictive interval improves as $M$ increases and begins to stabilize around $k\sim 10$. We observe that the choice of $M\sim 10\sqrt{N}$ leads to good performance in various simulations and real data analysis.

\begin{figure}[t]
  \centering
  
   \begin{subfigure}[b]{0.42\linewidth}
    \includegraphics[width=\linewidth]{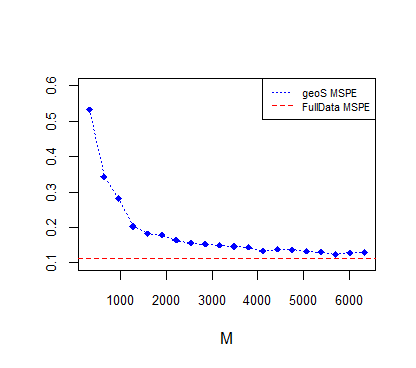}
     \caption{}
       \end{subfigure}\hspace{0.01\textwidth}%
       \centering
   \begin{subfigure}[b]{0.44\linewidth}
    \includegraphics[width=\linewidth]{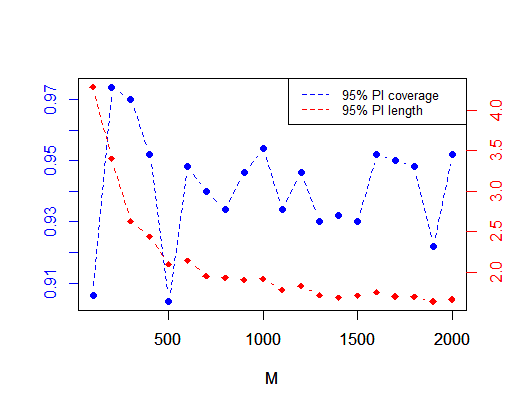}
     \caption{}
  \end{subfigure}
  
    \caption{(a) MSPE, (b) 95\% predictive interval coverage and length for different choices of $M$}
  \label{fig:Mchoice}
\end{figure}

\section{Vegetation Data Analysis}\label{sec:empirical2}

We implement \emph{geoS} to analyze vegetation data gathered through the Moderate Resolution Imaging Spectroradiometer (MODIS), which resides aboard the Terra and Aqua platforms on NASA spacecrafts. MODIS vegetation indices, produced on 16-day intervals and at multiple spatial resolutions, provide consistent information on the spatial distribution of vegetation canopy greenness, a composite property of leaf area, chlorophyll and canopy structure. The variable of interest will be the Normalized Difference Vegetation Index (NDVI), which quantifies the relative vegetation density for each pixel in a satellite image, by measuring the difference between the reflection in the near-infrared spectrum (NIR) and the red light reflection (RED): $NDVI=\frac{NIR - RED}{NIR +RED}$. High NDVI values, ranging between $0.6$ and $0.9$ indicate high density of green leaves and healthy vegetation, whereas low values, 0.1 or below, correspond to low or absence of vegetation as in the case of urbanized areas. When analyzed over different locations, NDVI can reveal changes in vegetation due to human activities such as deforestation and natural phenomena such as wild fires and floods. 

We analyze geographical data mapped on a projected sinusoidal grid (SIN), located on the western coast of the United States, more precisely zone~\textit{h08v05}, between $30^\circ N$ to $40^\circ N$ latitude and $104^\circ W$ to $130 ^\circ W$ longitude (see Figure~\ref{fig:DataNDVI2}(a)). The data, which were downloaded using the \texttt{R} package MODIS, comprises $133,000$ observed locations where the response was measured using the MODIS tool over a 16-day period in April 2016.
. We retained $N = 113,000$ observations (randomly chosen) for model fitting and used the rest for prediction. In order to fit (\ref{eq: svc_basic_general}), we set $y(\bu_n)$ to be the transformed NDVI ($\log(NDVI)+1$), $P=\tilde{P}=2$ and consider the $P \times 1$ vector of predictors that includes an intercept and a binary index of urban area, both with fixed effects and spatially varying coefficients, i.e., $\bx(\bu_n) = \tilde{\bx}(\bu_n) = (1, \;  x_2(\bu_n))^{\T}$, with $x_2(\bu_n) = \mathbbm{1}_U(\bu_n)$, where $U$ denotes an urban area. 

As in Section~\ref{sec:empirical}, we fit \emph{geoS} with $M\sim 10\sqrt{N}=2300$ and its uncompressed counterpart (\ref{eq: stack_model}), by modeling the varying coefficients through a linear combination of basis functions constructed using the tensor product of B-splines of order $q=4$ as in (\ref{eq: tensor_prod}). We set $H = H_1H_2= 39^2=1521$ uniformly distributed knots in the domain $\mathcal{D}$, which results in $HP=3042$ basis coefficients $\gamma_{jh}$ that are estimated. The specification of the priors are identical to the simulation studies for $\sigma^2$, and $\tau^2_j$, while $\beta_j$ is assigned a flat prior for $j=1,\ldots,P$.

{\small
\begin{table}[t]
\caption{Median and 95\% credible interval of $\beta_1,\beta_2$ for geoS and its uncompressed analogue are presented for the Vegetation data analysis. We also present MSPE, coverage and length of 95\% predictive intervals for the competing models. Computational efficiency for the two competing models are also provided.}
\label{table:tabledata1}
\centering
\scalebox{0.9}{\begin{tabular}{ccc}
  \hline
    \hline
   &  \multicolumn{1}{c}{ \textit{(geoS)}  $M= 2300$ }
  &                                          
\multicolumn{1}{c}{\textit{Uncompressed}} \\
\cline{2-3}
\\[-10pt]
\hline
$\beta_1$  & 0.222 (0.212, 0.230)  &  0.229 (0.219, 0.237) \\
$\beta_2$  & -0.060 (-0.074, -0.047)  &  -0.071 (-0.082, -0.059)  \\
 $MSPE$   & 0.00327 &  0.00276  \\
 \textit{95\% PI length} & 0.23445  &  0.22136   \\
 \textit{ 95\% PI coverage } & 0.95250  & 0.95411  \\
  \textit{Computation efficiency} & 3.5424   &  0.46901   \\   \hline
   \hline
\end{tabular}}
\end{table}
}

We ran an MCMC chain for 5000 iterations and retained 2000 samples for posterior inference after adequate convergence was diagnosed.  The posterior mean of $\beta_1$ and $\beta_2$, along with their estimated 95\% credible intervals corresponding to \emph{geoS} and the uncompressed model are presented in Table~\ref{table:tabledata1}. Additionally, Table~\ref{table:tabledata1} offers predictive inference from both competitors based on $N^*=20,000$ test observations. According to both models there is a global pattern of relatively low vegetation density for areas with positive urban index as the estimated slope coefficient $\beta_2$ is negative in the compressed \emph{geoS} and in the uncompressed models. In terms of point prediction and quantification of predictive uncertainty, the two competitors offer practically indistinguishable results, as revealed by Table~\ref{table:tabledata1}. 

{\small
\begin{figure}[htbp]
  \centering
   \hspace{0.1\textwidth}%
   \begin{subfigure}[b]{0.25\linewidth}
    \includegraphics[width=\linewidth]{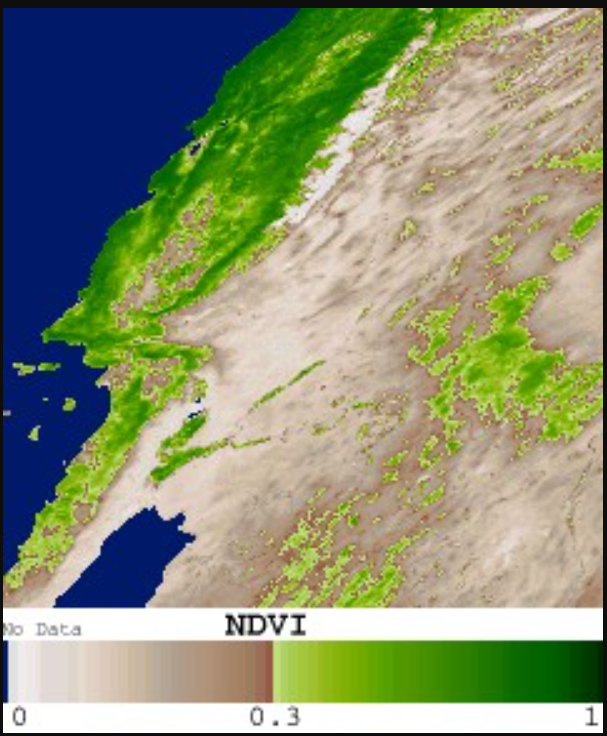}
     \caption{}
       \end{subfigure}\hspace{0.1\textwidth}%
       \centering
   \begin{subfigure}[b]{0.45\linewidth}
    \includegraphics[width=\linewidth]{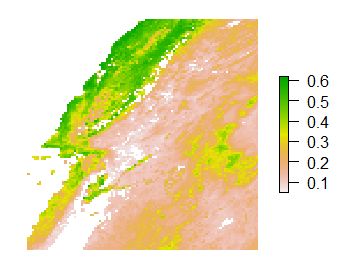}
     \caption{}
  \end{subfigure}
  \begin{subfigure}[b]{0.3\linewidth}
    \includegraphics[width=\linewidth]{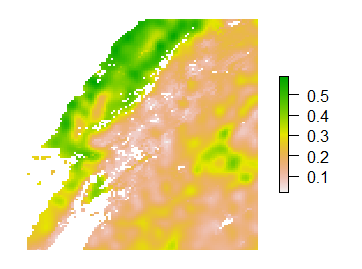}
    \caption{}
  \end{subfigure}
   \begin{subfigure}[b]{0.3\linewidth}
    \includegraphics[width=\linewidth]{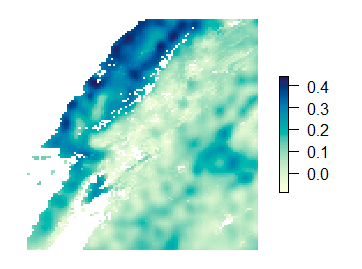}
    \caption{}
  \end{subfigure}
   \begin{subfigure}[b]{0.3\linewidth}
    \includegraphics[width=\linewidth]{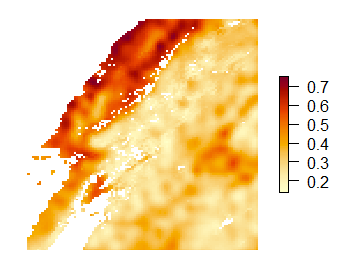}
    \caption{}
  \end{subfigure}
  \begin{subfigure}[b]{0.3\linewidth}
    \includegraphics[width=\linewidth]{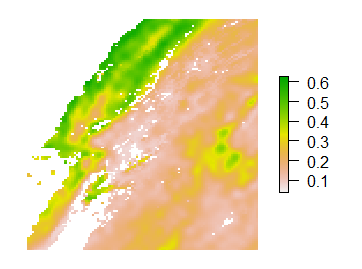}
    \caption{}
  \end{subfigure}
    \begin{subfigure}[b]{0.3\linewidth}
    \includegraphics[width=\linewidth]{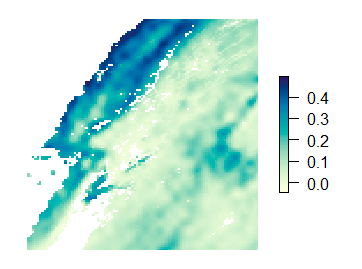}
     \caption{}
  \end{subfigure}
   \begin{subfigure}[b]{0.3\linewidth}
    \includegraphics[width=\linewidth]{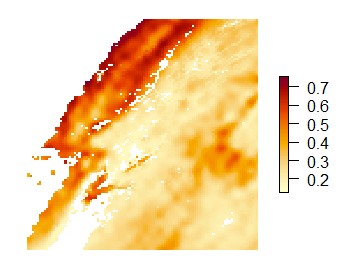}
    \caption{}
  \end{subfigure}
    \caption{Colored NDVI images of western United States (zone h08v05). (a) Satellite image: MODIS/Terra Vegetation Indices 16-Day L3 Global 1 km SIN Grid - 2016.04.06 to 2016.04.21; (b) True NDVI surface (raw data). Figures (c), (d) \& (e) present NVDI predicted 50\%, 2.5\% and 97.5\% quantiles for the \textit{geoS} model. Figures (f), (g) \& (h) present  NVDI Predicted 50\%, 2.5\% and 97.5\% quantiles for the uncompressed model.}
  \label{fig:DataNDVI2}
\end{figure}
}

Further, Figure \ref{fig:DataNDVI2} shows that the 2.5\%, 50\% and 97.5\% quantiles for the posterior predictive distribution are almost identical for the two competitors across the spatial domain, with the exception of neighborhoods around locations having lower NDVI values. Notably, \emph{geoS} offers nominal coverage for 95\% prediction intervals, even with a significant reduction in the sample size from $N=113,000$ to $M=2300$. Data sketching to such a scale considerably reduces the computation time, leading to a much higher computation efficiency of \emph{geoS} in comparison with its uncompressed analogue. 

\section{Summary}\label{sec: summary}
We have developed Bayesian sketching for functional response and predictor variables using varying coefficient regression models. The method achieves dimension reduction by compressing the data using a random linear transformation. The approach is different from the prevalent methods for large functional data in that no new models or algorithms need to be developed since those available for existing varying coefficient regression models can be directly applied to the compressed data. We establish attractive concentration properties of the posterior and posterior predictive distributions and empirically demonstrate the effectiveness of this method for analyzing large functional data sets. 

\acks{Rajarshi Guhaniyogi acknowledges funding from the National Science Foundation through DMS-2220840 and DMS-2210672; funding from the Office of Naval Research through N00014-18-1-274; and funding from the National Institute of Neurological Disorders and Stroke (NIH/NINDS) through R01NS131604. Sudipto Banerjee acknowledges funding from the National Science Foundation through DMS-1916349 and DMS-2113778; and funding from the National Institute of Health through NIEHS-R01ES027027 and NIEHS-R01ES030210.}

\appendix


\section{Proof of Lemma~\ref{lem2}}\label{sec:lem2}
\begin{proof}
Define
\begin{align}
\mathcal{A}_{1N}=\left\{K(f^*,f)\leq M_N\theta_N^2,\:\:V(f^*,f)\leq M_N\theta_N^2\right\}.
\end{align}
By Lemma~10 in \cite{ghosal2007convergence}, to show (\ref{eq:lem1}) it is enough to show that $\Pi(\mathcal{A}_{1N})\gtrsim \exp(-C_2M_N\theta_N^2)$ for some constant $C_2>0$.
Let $e_k$, $1\leq k\leq M_N$ be the ordered eigenvalues of $(\bPhi\bPhi^{\T})^{-1}$. After some calculations, we derive the following expressions,
\begin{align}\label{eq2}
K(f^{\ast},f) &= \frac{1}{2}\left\{\sum_{k=1}^{M_N}(e_k-1-\log(e_k)) + \mathbb{E}_{\mathcal{U}}\mathbb{E}_{\mathcal{\bX}}\left[|| \tilde{\bX}_{\bPhi,N}\bB(\bgamma-\bgamma^*)- \tilde{\bX}_{\bPhi,N}\bet^*||_2^2\right]\right\} \mbox{ and } \nonumber\\
V(f^*,f) &= \sum_{k=1}^{M_N}\frac{(1-e_k)^2}{2} + \mathbb{E}_{\mathcal{U}}E_{\mathcal{\bX}}\left[||(\bPhi\bPhi^{\T})^{-1}( \tilde{\bX}_{\bPhi,N}\bB(\bgamma-\bgamma^*)-\tilde{\bX}_{\bPhi,N}\bet^*)||_2^2\right],
\end{align}
where $\bet^* = (\bet^*(\bu_1)^{\T},...,\bet^*(\bu_N)^{\T})^{\T}$, $\bet^{\ast}(\bu) = (\eta_1^{\ast}(\bu),\ldots,\eta_{\tilde{P}}^{\ast}(\bu))^{\T}$, and $\eta_j^{\ast}(\bu) = w_j^*(\bu)-\sum_{h=1}^{H_N}B_{jh}(\bu)\gamma_{jh}^*$. Expanding $\log(e_k)$ in the powers of $(1-e_k)$ and using Lemma~1 in \cite{jeong2020unified} we find $(e_k-1-\log(e_k))\sim (1-e_k)^2/2$. Another use of Lemma~1 in \cite{jeong2020unified} yields $\sum_{k=1}^{M_N}(1-e_k)^2\lesssim ||\bI-\bPhi\bPhi^{\T}||_F^2\lesssim M_N/N\leq M_N\theta_N^2$. Using Lemma~\ref{lem1}, $e_k\asymp 1$ for all $k=1,...,M_N$. Hence, from (\ref{eq2})
\begin{align}\label{prob_KL}
\Pi(\mathcal{A}_{1N})&\gtrsim\Pi\left(\left\{\bgamma: \mathbb{E}_{\mathcal{U}}\mathbb{E}_{\mathcal{\bX}}\left[|| \tilde{\bX}_{\bPhi,N}\bB(\bgamma-\bgamma^*)- \tilde{\bX}_{\bPhi,N}\bet^*||_2^2\right]\lesssim M_N\theta_N^2\right\}\right)\nonumber\\
&\geq\Pi\left(\left\{\bgamma: \mathbb{E}_{\mathcal{U}}\mathbb{E}_{\mathcal{\bX}}\left[|| \tilde{\bX}_{\bPhi,N}\bB(\bgamma-\bgamma^*)||_2^2\right] + \mathbb{E}_{\mathcal{U}}\mathbb{E}_{\mathcal{\bX}}\left[||\tilde{\bX}_{\bPhi,N}\bet^*||_2^2\right]\lesssim M_N\theta_N^2/2\right\}\right),
\end{align}
where we use $||\ba-\bb||_2^2\leq 2(||\ba||_2^2+||\bb||_2^2)$, for all $\ba,\bb$. Let $\bB_j(\bu_n)=(B_{j1}(\bu_n),...,B_{jH_N}(\bu_n))^{\T}$, for $n=1,...,N$ and $j=1,...,\tilde{P}$. By Assumption~(E),
\begin{align*}
 \mathbb{E}_{\mathcal{U}}\mathbb{E}_{\mathcal{\bX}}\left[|| \tilde{\bX}_{\bPhi,N}\bB(\bgamma-\bgamma^*)||_2^2\right] &\asymp \kappa_NE_{\mathcal{U}}\mathbb{E}_{\mathcal{\bX}}\left[||\tilde{\bX}_{N}\bB(\bgamma-\bgamma^*)||_2^2\right] \\
 &\quad = \kappa_N(\bgamma-\bgamma^*)^{\T}\mathbb{E}_{\mathcal{U}}\mathbb{E}_{\mathcal{X}}\left[\bB^{\T}\tilde{\bX}_{N}^{\T}\tilde{\bX}_{N}\bB\right](\bgamma-\bgamma^*).
\end{align*}

Recalling that $\bB^{\T}\tilde{\bX}_{N}^{\T}\tilde{\bX}_{N}\bB$ is a $H_N\tilde{P}\times H_N\tilde{P}$ matrix with the $(j,j')$-th block given by $\sum_{n=1}^N\tilde{x}_j(\bu_n)\bB_j(\bu_n)\bB_{j'}(\bu_n)^{\T}\tilde{x}_{j'}(\bu_n)$, we obtain
\begin{align*}
\mathbb{E}_{\mathcal{U}}E_{\mathcal{X}}\left[\sum_{n=1}^N\tilde{x}_j(\bu_n)\bB_j(\bu_n)\bB_{j'}(\bu_n)^{\T}\tilde{x}_{j'}(\bu_n)\right] &\asymp \mathbb{E}_{\mathcal{U}}\left[\sum_{n=1}^N\bB_j(\bu_n)\bB_{j'}(\bu_n)^{\T}\right]\\
&\quad =N\mathbb{E}_{\mathcal{U}}\left[\bB_j(\bu_1)\bB_{j'}(\bu_1)^{\T}\right],
\end{align*}
where the last equation follows since $\bu_1,...,\bu_N$ are i.i.d.. Hence,
\begin{align}\label{eq:use}
\mathbb{E}_{\mathcal{U}}\mathbb{E}_{\mathcal{X}}\left[|| \tilde{\bX}_{\bPhi,N}\bB(\bgamma-\bgamma^*)||_2^2\right]\asymp N\kappa_N\mathbb{E}_{\mathcal{U}}\left[||\bB(\bu_1)(\bgamma-\bgamma^*)||_2^2\right]\asymp N\kappa_N||\bgamma-\bgamma^*||_2^2/H_N,
\end{align}
where $\bB(\bu)=[\bB_1(\bu):\cdots:\bB_{\tilde{P}}(\bu)]^{\T}$. The last expression follows from Lemma A.1 of \cite{huang2004polynomial}. From Assumption~(E) again,
\begin{align}\label{eq:refer}
\mathbb{E}_{\mathcal{U}}\mathbb{E}_{\mathcal{X}}\left[||\tilde{\bX}_{\bPhi,N}\bet^*||_2^2\right] &\asymp\kappa_N\mathbb{E}_{\mathcal{U}}\mathbb{E}_{\mathcal{X}}\left[||\tilde{\bX}_{N}\bet^*||_2^2\right]=\kappa_N\mathbb{E}_{\mathcal{U}}\mathbb{E}_{\mathcal{X}}\left[\sum_{n=1}^N\sum_{j=1}^{\tilde{P}}\tilde{x}_j(\bu_n)^2\eta_{j}^*(\bu_n)^2\right] \nonumber\\ &\asymp\kappa_N\mathbb{E}_{\mathcal{U}}\left[\sum_{n=1}^N\sum_{j=1}^{\tilde{P}}\eta_{j}^*(\bu_n)^2\right]\lesssim N\kappa_NH_N^{-2\xi},
\end{align}
where the last inequality follows from Assumption~(A). From (\ref{prob_KL}),
\begin{align*}
\Pi(\mathcal{A}_{1N}) &\gtrsim\Pi\left(\bgamma: N\kappa_N||\bgamma-\bgamma^*||_2^2/H_N+N\kappa_NH_N^{-2\xi}\lesssim M_N\theta_N^2/2\right) \\ 
&\quad \gtrsim \Pi\left(\bgamma: N\kappa_N||\bgamma-\bgamma^*||_2^2\leq M_NH_N\theta_N^2\right),
\end{align*}
where the last step follows from Assumptions (B) and (E). Using the fact that $\int_{a}^b\exp(-x^2/2)dx\geq \exp(-(a^2+b^2)/2)(b-a)$, we obtain
\begin{align*}
&\Pi\left(\bgamma: N\kappa_N||\bgamma-\bgamma^*||_2^2\leq M_NH_N\theta_N^2\right) \geq \prod_{h,j=1}^{H_N,\tilde{P}}\Pi(|\gamma_{jh}-\gamma_{jh}^*|\leq \theta_N/\sqrt{\tilde{P}})\\
&\qquad\qquad\qquad \geq \exp(-||\bgamma^*||_2^2-\theta_N^2H_N )(2\theta_N/\sqrt{\tilde{P}})^{H_N\tilde{P}} \gtrsim \exp(-M_N\theta_N^2C_2),
\end{align*}
for any $C_2>0$, where the first inequality follows from Assumption~(E) and the last inequality follows from $H_NP\log(\sqrt{\tilde{P}}/2\theta_N)\prec M_N\theta_N^2$ (since $M_N\theta_N^2\asymp M_N^{d/(d+2\xi)}$ while $H_N\prec M_N^{d/(d+2\xi)}$).
\end{proof}

\section{Proof of Lemma~\ref{lem3}}\label{sec:lem3}
\begin{proof}
Denote $\tilde{\bX}_{\bPhi,\bB,N}=\tilde{\bX}_{\bPhi,N}\bB$, $\widehat{\bgamma}=(\tilde{\bX}_{\bPhi,\bB,N}^{\T}\tilde{\bX}_{\bPhi,\bB,N})^{-1}\tilde{\bX}_{\bPhi,\bB,N}^{\T}\by_{\bPhi,N}$ and a sequence of random variables $\zeta_N=I(||\tilde{\bX}_{\bPhi,\bB,N}\widehat{\bgamma}-\tilde{\bX}_{\bPhi,\bB,N}\bgamma^*||_2 \gtrsim\theta_NM_N^{1/2})$. Then,
\begin{align*}
\mathbb{E}^*(\zeta_N)&=P^*(||\tilde{\bX}_{\bPhi,\bB,N}\widehat{\bgamma}-\tilde{\bX}_{\bPhi,\bB,N}\bgamma^*||_2 \gtrsim\theta_NM_N^{1/2})\nonumber\\
&\qquad =P^*(||\bP_{\tilde{\bX}_{\bPhi,\bB,N}} \tilde{\bX}_{\bPhi,N}\bet^*+\bP_{\tilde{\bX}_{\bPhi,B,N}}\bepsilon||_2^2\gtrsim \theta_N^2M_N)\nonumber\\
&\qquad \qquad \leq P^*(||\bP_{\tilde{\bX}_{\bPhi,\bB,N}} \tilde{\bX}_{\bPhi,N}\bet^*||_2^2+||P_{\tilde{\bX}_{\bPhi,\bB,N}}\bepsilon||_2^2\gtrsim \theta_N^2M_N),
\end{align*}
where $\bP_{\tilde{\bX}_{\bPhi,\bB,N}}$ denotes the projection matrix corresponding to the matrix $\tilde{\bX}_{\bPhi,\bB,N}$. Note that
\begin{align*}
||\bP_{\tilde{\bX}_{\bPhi,\bB,N}}\tilde{\bX}_{\bPhi,N}\bet^*||_2^2
\leq\bet^{*\T}\tilde{\bX}_{\bPhi,N}^{\T}\bP_{\tilde{\bX}_{\bPhi,\bB,N}} \tilde{\bX}_{\bPhi,N}\bet^*
\leq ||\tilde{\bX}_{\bPhi,N}\bet^*||_2^2.
\end{align*}
We then refer to equation (\ref{eq:refer}) to see that $E_{\mathcal{U}}E_X||\tilde{\bX}_{\bPhi,N}\bet^*||_2^2\lesssim N\kappa_N M_N^{-2\xi/(2\xi+d)}\prec M_N\theta_N^2$. The above two facts together conclude that
\begin{align*}
\mathbb{E}_{\mathcal{U}}\mathbb{E}_X[||\bP_{\tilde{\bX}_{\bPhi,\bB,N}} \tilde{\bX}_{\bPhi,N}\bet^*||_2^2]\lesssim N\kappa_N M_N^{-2\xi/(2\xi+d)}\prec M_N\theta_N^2.
\end{align*}
$\mathbb{E}^*(\zeta_N)\lesssim P^*(||\bP_{\tilde{\bX}_{\bPhi,\bB,N}}\bepsilon||_2^2\gtrsim \theta_N^2M_N)= P^*(\bepsilon^{\T}\bP_{\tilde{\bX}_{\bPhi,\bB,N}}\bepsilon\gtrsim \theta_N^2M_N)$.

Note that under $P^*$, $\bepsilon\sim N(0,\bPhi\bPhi^{\T})$, and,  $e_{max}(\bPhi\bPhi^{\T})\asymp 1$ (by Lemma~\ref{lem1}). Also note that Lemma 1 of \cite{laurent2000adaptive} can be simplified to write $P^*(\chi_{p^*}^2>x)\leq \exp(-x/4)$, for $x\geq 8p^*$. Further, $\bepsilon^{\T}\bP_{\tilde{\bX}_{\bPhi,\bB,N}}\bepsilon$ follows a $\chi^2$ distribution with degree of freedom less than equal to $H_N\tilde{P}\prec M_N\theta_N^2=M_N^{d/(d+2\xi)}$. Using all the above facts, we conclude that $E^*(\zeta_N)\lesssim \exp(-M_N\theta_N^2)$.

Next, for $\bgamma\in\mathcal{B}_N^c$, we show that $\mathbb{E}_{\mathcal{U}}\mathbb{E}_{\mathcal{X}}||\tilde{\bX}_{\bPhi,\bB,N}\bgamma-\tilde{\bX}_{\bPhi,\bB,N}\bgamma^*||_2^2\gtrsim M_N\theta_N^2$. To see this, note that
\begin{align*}
& \mathbb{E}_{\mathcal{U}}E_{\mathcal{X}}||\tilde{\bX}_{\bPhi,\bB,N}\bgamma-\tilde{\bX}_{\bPhi,\bB,N}\bgamma^*||_2^2 = \mathbb{E}_{\mathcal{U}}\mathbb{E}_{\mathcal{X}}\left[(\bgamma-\bgamma^*)^{\T}\tilde{\bX}_{\bPhi,\bB,N}^{\T}\tilde{\bX}_{\bPhi,\bB,N}(\bgamma-\bgamma^*)\right]\nonumber\\
&\qquad\asymp \kappa_N \mathbb{E}_{\mathcal{U}}\mathbb{E}_{\mathcal{X}}\left[(\bgamma-\bgamma^*)^{\T}\bB^{\T}\tilde{\bX}_{N}^{\T}\tilde{\bX}_{N}\bB(\bgamma-\bgamma^*)\right]
\asymp N\kappa_N||\bgamma-\bgamma^*||_2^2/H_N\gtrsim M_N\theta_N^2,
\end{align*}
where the second line follows using similar calculations leading to equation (\ref{eq:use}). 

Now, using the fact that 
$||\tilde{\bX}_{\bPhi,\bB,N}\widehat{\bgamma}-\tilde{\bX}_{\bPhi,\bB,N}\bgamma||_2\geq -||\tilde{\bX}_{\bPhi,\bB,N}\widehat{\bgamma}-\tilde{\bX}_{\bPhi,\bB,N}\bgamma^*||_2+||\tilde{\bX}_{\bPhi,\bB,N}\bgamma-\tilde{\bX}_{\bPhi,\bB,N}\bgamma^*||_2$, we obtain
\begin{align*}
\mathbb{E}_{\gamma}(1-\zeta_N) &= P_{\bgamma}(||\tilde{\bX}_{\bPhi,\bB,N}\widehat{\bgamma}-\tilde{\bX}_{\bPhi,\bB,N}\bgamma^*||_2\lesssim \theta_NM_N^{1/2}) \\ 
&= P_{\bgamma}(||\tilde{\bX}_{\bPhi,\bB,N}\widehat{\bgamma}-\tilde{\bX}_{\bPhi,\bB,N}\bgamma||_2\gtrsim \theta_NM_N^{1/2})\nonumber\\
&\leq P_{\gamma}(||\bP_{\tilde{\bX}_{\bPhi,\bB,N}}\bepsilon||_2^2\gtrsim \theta_N^2M_N)\lesssim \exp(-M_N\theta_N^2),
\end{align*}
where the last inequality follows from simplifying the conclusion for Lemma 1 of \cite{laurent2000adaptive} (as is done before) and the fact that under $P_{\bgamma}$, $\bepsilon\sim N({\boldsymbol 0},\bI)$. 
\end{proof}

\section{Proof of Theorem~\ref{thm_main}}\label{sec:thm_main}
\begin{proof}
Note that,
\begin{align*}
||\bw-\bw^*||_{2} &\leq ||\bw-\tilde{\bw}^*+\tilde{\bw}^*-\bw^*||_{2}\leq ||\bw-\tilde{\bw}^*||_{2}+||\tilde{\bw}^*-\bw^*||_{2}=||\bw-\tilde{\bw}^*||_{2}+||\bet^*||_{2}\nonumber\\
&\quad \lesssim ||\bw-\tilde{\bw}^*||_{2}+P^{1/2}H_N^{-\xi}\asymp ||\bgamma-\bgamma^*||_2H_N^{-1/2}+P^{1/2}H_N^{-\xi} \\
&\quad\quad \asymp ||\bgamma-\bgamma^*||_2H_N^{-1/2}+P^{1/2}M_N^{-\xi/(2\xi+d)},
\end{align*}
where $\tilde{\bw}^*(\bu)=(\sum_{h=1}^{H_N}B_{1h}(\bu)\gamma_{1h}^*,\ldots,\sum_{h=1}^{H_N}B_{\tilde{P}h}(\bu)\gamma_{\tilde{P}h}^*)^{\T}$, and the first inequality in the second line follows from the property of B-splines \citep{huang2004polynomial}. The second expression in the second line follows from Lemma A.1 of \cite{huang2004polynomial}. Using the fact that $\tilde{P}^{1/2}M_N^{-\xi/(2\xi+d)}=O(\theta_N)$,
we have $\left\{\bw: ||\bw-\bw^*||_{2}\geq \tilde{C}\theta_N\right\}\subset\left\{\bgamma: ||\bgamma-\bgamma^*||_2H_N^{-1/2}\geq C_{2w}\theta_N\right\}$, for some constant $C_{2w}>0$.

Denote $\mathcal{B}_N = \left\{\bgamma: ||\bgamma-\bgamma^*||_2H_N^{-1/2}\leq C_{2w}\theta_N\right\}$. To prove the theorem, it is enough to establish
\begin{align}\label{eq4}
\mathbb{E}^*\Pi(||\bgamma-\bgamma^*||_{2}H_N^{-1/2}\geq C_{2w}\theta_N| \by_{\bPhi,N},\tilde{\bX}_{\bPhi,N})\rightarrow 0,\:\:\mbox{as}\:\:N\rightarrow\infty,
\end{align}
Note that,
\begin{align}
& \mathbb{E}^*[\Pi(\mathcal{B}_N^c\given \by_{\bPhi,N},\tilde{\bX}_{\bPhi,N})]\leq \mathbb{E}^*\zeta_N+ \mathbb{E}^*[\Pi(\mathcal{B}_N^c\given \by_{\bPhi,N},\tilde{\bX}_{\bPhi,N})(1-\zeta_N)1_{y_N\in\mathcal{A}_N^c}] + P^*(\mathcal{A}_N)\nonumber\\
&\; = \mathbb{E}^*[\zeta_N] + \mathbb{E}^*\left[1_{\by_N\in\mathcal{A}_N^c}\frac{\left\{(1-\zeta_N)\int_{\mathcal{B}_N^c}\{f(\by_{\bPhi,N}|\bgamma)/f^*( \by_{\bPhi,N}|\bgamma^*)\}\pi_N(\bgamma)d\bgamma\right\}}{\left\{\int\{f(\by_{\bPhi,N}|\bgamma)/f^*( \by_{\bPhi,N}|\bgamma^*)\}\pi_N(\bgamma)d\bgamma\right\}}\right] + P^*(\mathcal{A}_N),
\end{align}
where $\mathcal{A}_N$ is a set defined in the statement of Lemma~\ref{lem2} and $\zeta_N$ can be regarded as a sequence of random variables as defined in 
Lemma~\ref{lem3}.
By Lemma~\ref{lem2}, $P^*(\mathcal{A}_N)\rightarrow 0$, as $N, M_N\rightarrow\infty$. Also, by Lemma~\ref{lem3}, $\mathbb{E}^*\zeta_N\rightarrow 0$, as $N, M_N\rightarrow\infty$. To show (\ref{eq4}), it remains to prove that $$
\frac{\mathbb{E}^*\left[1_{\by_N\in\mathcal{A}_N^c}\int_{\mathcal{B}_N^c}\{f( \by_{\bPhi,N}|\bgamma)/f^*( \by_{\bPhi,N}|\bgamma^*)\}\pi_N(\bgamma)d\bgamma\right]}{\left[\int\{f( \by_{\bPhi,N}|\bgamma)/f^*( \by_{\bPhi,N}|\bgamma^*)\}\pi_N(\bgamma)d\bgamma\right]}
\rightarrow 0\; \mbox{ as $N,M_N\rightarrow\infty$}.$$ To this end, we have
\begin{align*}
\mathbb{E}^*\left[1_{\by_N\in\mathcal{A}_N^c}\int_{\mathcal{B}_N^c}\{f( \by_{\bPhi,N}|\bgamma)/f^*( \by_{\bPhi,N}|\bgamma^*)\}\pi_N(\bgamma)d\bgamma\right] &\leq \sup\limits_{\bgamma\in\mathcal{B}_N^c}\mathbb{E}_{\bgamma}(1-\zeta_N)\Pi(\mathcal{A}_N^c) \\
&\quad \leq \exp(-C_{2w}M_N\theta_N^2),
\end{align*}
where $\Pi(\mathcal{A}_N^c)$ is the prior probability of the set $\mathcal{A}_N^c$. The denominator 
$$\int\{f( \by_{\bPhi,N}|\bgamma)/f^*(\by_{\bPhi,N}|\bgamma^*)\}\pi(\bgamma)d\bgamma\geq \exp(-C_1M_N\theta_N^2)$$ on $\mathcal{A}_N$, where $C_1$ is chosen so that $C_1<C_{2w}$. Thus, $\mathbb{E}^*\Pi(\mathcal{B}_N^c\given \by_{\bPhi,N},\tilde{\bX}_{\bPhi,N})1_{\by_N\in\mathcal{A}_N^c}\leq \exp(-(C_{2w}-C_1)M_N\theta_N^2)\rightarrow 0$, as $N,M_N\rightarrow\infty$.
\end{proof}

\section{Proof of Theorem~\ref{thm_main2}}\label{sec:thm_main2}
\begin{proof} For densities $f_u$ and $f^*$, we have
\begin{align*}
h(f_u,f^*) &= 1-\exp\left\{-\left(\sum_{j=1}^{\tilde{P}}\tilde{x}_j(\bu_0)w_j(\bu_0)-\sum_{j=1}^{\tilde{P}}\tilde{x}_j(\bu_0)w_j^*(\bu_0)\right)^2/8\right\}\nonumber\\
&\leq 1-\exp\left\{-\tilde{P}\sum_{j=1}^{\tilde{P}}\left(w_j(\bu_0)-w_j^*(\bu_0)\right)^2/8\right\} \\
&\leq 1-\exp\left\{-\tilde{P}||\bw(\bu_0)-\bw^*(\bu_0)||_2^2/8\right\}
\end{align*}
Then, $\mathbb{E}_{\mathcal{U}}[h(f_u,f^*)]\leq 1-\exp\left(-\tilde{P}||\bw-\bw^*||_2^2/8\right)$, by Jensen's inequality.  Further,
\begin{align*}
\mathbb{E}^* \mathbb{E} \mathbb{E}_{\mathcal{U}}[h(f_u,f^*)|\tilde{\bX}_{\bPhi,N},\by_{\bPhi,N}]=
\left\{1-\exp\left(-\tilde{P}\tilde{C}^2\theta_N^2/8\right)\right\}+
2\Pi_N(||\bw-\bw^*||_2\geq \tilde{C}\theta_N),
\end{align*}
which implies
\begin{align*}
\mathbb{E}^*\mathbb{E} \mathbb{E}_{\mathcal{U}}[h(f_u,f^*)]
\leq \left\{1-\exp\left(-\tilde{P}\tilde{C}^2\theta_N^2/8\right)\right\}+
2\mathbb{E}^*\Pi_N(||\bw-\bw^*||_2\geq \tilde{C}\theta_N)\rightarrow 0
\end{align*}
as $N,M_N\rightarrow\infty$, where the last expression followed by the conclusion of Theorem~\ref{thm_main} and the fact that $\theta_N\rightarrow 0$ as $N,M_N\rightarrow\infty$.
\end{proof}

\bibliography{reference_biom}
\end{document}